\newcommand{\matindex}[1]{\mbox{\scriptsize#1}}
\theoremstyle{plain}
\newtheorem{theorem}{Theorem}[section]
\newtheorem{lemma}[theorem]{Lemma}
\newtheorem{corollary}[theorem]{Corollary}
\theoremstyle{definition}
\theoremstyle{remark}
\icmltitlerunning{Lightweight Projective Derivative Codes}
\begin{document}

\twocolumn[
  \icmltitle{Lightweight Projective Derivative Codes \\ for Compressed Asynchronous Gradient Descent}



  \icmlsetsymbol{equal}{*}

  \begin{icmlauthorlist}
    \icmlauthor{Pedro Soto}{yyy}
    \icmlauthor{Ilia Ilmer}{yyy}
    \icmlauthor{Haibin Guan}{comp}
    \icmlauthor{Jun Li}{sch}
  \end{icmlauthorlist}

  \icmlaffiliation{yyy}{Department of Computer Science, The Graduate Center, CUNY, New York, USA}
  \icmlaffiliation{yyy}{Department of Computer Science, The Graduate Center, CUNY, New York, USA}
  \icmlaffiliation{comp}{Icahn School of Medicine at Mount Sinai, New York, USA}
  \icmlaffiliation{sch}{Department of Computer Science, CUNY Queens College \& Graduate Center, New York, USA}

  \icmlcorrespondingauthor{Pedro Soto}{psoto@gradcenter.cuny.edu}
  \icmlkeywords{Machine Learning, ICML}

  \vskip 0.3in
]



\printAffiliationsAndNotice{}  

\begin{abstract}
  Coded distributed computation has become common practice for performing gradient descent
  on large datasets to mitigate stragglers and other faults.
  This paper proposes a novel algorithm that \textit{encodes the partial derivatives} themselves and furthermore optimizes the codes by performing \textit{lossy compression} on the derivative codewords by maximizing the information contained in the codewords while minimizing the information between the codewords.
  The utility of this application of coding theory is a geometrical consequence of the observed fact in optimization research that noise is tolerable, sometimes even helpful, in gradient descent based learning algorithms since it helps avoid overfitting and local minima.
  This stands in contrast with much current conventional work on distributed coded computation which focuses on recovering all of the data from the workers.
  A second further contribution is that the \emph{low-weight} nature of the coding scheme allows for \emph{asynchronous gradient updates} since the code can be iteratively decoded;
  \emph{i.e.,} a worker's task can immediately be updated into the larger gradient.
  The directional derivative is always a linear function of the direction vectors; thus, our framework is robust since it can apply linear coding techniques to general machine learning frameworks such as deep neural networks.
\end{abstract}

\section{Introduction}

The majority of machine learning problems take the form: find a function
$
  h_{w_0,...,w_k}
$
in some family of hypothesis functions $\mathcal{H}$ that are parameterized over the $w_0,...,w_k$ which best explains the data,
\begin{equation}\label{eq:data_def}
  D = \begin{array}{c|cccccc}
              & x_0       & \dots     & x_u       &
    y_0       & \dots     & y_v                     \\
    \hline

    D_0       & x^{0}_1   & \dots     & x_u^{0}   &
    y_1^{0}   & \dots     & y_v^{0}                 \\

    D_1       & x^{1}_1   & \dots     & x^{1}_u   &
    y^{1}_1   & \dots     & y^{1}_v                 \\

    \vdots    & \vdots    & \ddots    & \vdots    &
    \vdots    & \ddots    & \vdots                  \\

    D_N       & x^{(N)}_1 & \dots     & x_u^{(N)} &
    y^{(N)}_1 & \dots     & y_v^{(N)}               \\
  \end{array},
\end{equation}
where the $D_i$ are the datapoints, the $x_i$ are the input features, and the $y_i$ are the output features.

If the $  h_{w_0,...,w_k}$ are smoothly parameterized by the $w_0,...,w_k$, then this is usually accomplished by performing gradient descent on the summation of loss functions of the form $l_i(w) = l(h_w(x^{(i)}), y^{(i)})$ to compute

\begin{align}
  \min_{w \in \mathcal{W}} \mathcal{L}(D,w) & \overset{\text{def}} {=} \min_{w \in \mathcal{W}} \sum_{D_i}l\left(h_w(x^{(i)}),y^{(i)}
  \right) =                                                               \nonumber                                                   \\\label{eq:min_loss}
                                            & =\min_{w \in \mathcal{W}} \sum_{D_i}l_i\left(w
  \right) ;
\end{align}
\textit{i.e., find the $w$ that best fits $D$.}
For example, one of the most common loss functions, $l(f,y)=\frac{1}{2}||f-y||^2$, gives us the mean squared error and the ubiquitous method of least squares.

If the dataset $D$ has many datapoints $D_i$ then the overall computation, or \textit{job}, is distributed as \textit{tasks} amongst \textit{workers}, which model a distributed network of computing devices. This solution creates a new problem; stragglers and other faults can severely impact the performance and overall training time. An emerging technique is to use distributed coded computation to mitigate stragglers and other failures in the network. Many of the current algorithms only encode the data; this paper proposes further encoding the directional derivatives as well in such a way that allows for asynchronous gradient updates using low weight codes.
Furthermore the number of weights usually grow quite large as well\footnote{As a matter of fact it grows proportionately with the number of features or \textit{dimension of the dataset}}, which necessitates a ``2D'' coding scheme which codes both the data and the derivatives.

\subsection{Related Work}
The two algorithms which we use to benchmark our algorithm are Gradient Coding \cite{pmlr-v70-tandon17a} and $K$-Asynchronous Gradient Descent \cite{pmlr-v84-dutta18a, Dutta2021SlowAS}; however, many of the design of our coding scheme is also influenced by the works in \cite{Lee2018a}, \cite{NIPS2017_e6c2dc3d}, and \cite{8765375}.

\subsubsection{Gradient Coding}
In the gradient coding Gradient Coding \cite{pmlr-v70-tandon17a} scheme the main idea is to encode the derivatives with respects to the data partitions $\frac{\partial}{\partial D_i}$ from Eq.~\eqref{eq:data_def};
since the loss function in Eq.~\eqref{eq:min_loss} splits up into a \emph{sum} of smaller loss functions, $l_i$, in terms of the partitions, $D_i$, linear codes can be efficiently applied to the gradients $\frac{\partial}{\partial D_i}$.
This work has gone on to spawn many works \cite{8635869, charles2018gradient, Ye2018CommunicationComputationEG, 8437467, 8849684,
  JMLR:v20:18-148, ozfatura2019gradient,  Ozfatura2019DistributedGD, 8849580, 9216021, 8849431, maity2019robust, 8989328, 8682911, NEURIPS2019_3eb2f1a0,
  9088154, 9368996, e22050544, Bitar2020StochasticGC, Zhang2021LAGCLA}; gradient coding is currently a vibrant topic of research.
The main improvements of our coding scheme over the state of the art in Gradient Coding is that our code: can perform asynchronous coded updates, allows the backpropagation itself to be coded (which greatly reduces the communication complexity for high dimensional data), our code has 0 encoding and decoding overhead in terms of multiplications, and has an overall reduction in the redundancy of data/memory overhead.

\subsubsection{Asynchronous Gradient Descent}
The main idea in Asynchronous Gradient Descent \cite{Ferdinand2017AnytimeEO, pmlr-v84-dutta18a, Ferdinand2018AnytimeSG,
  DBLP:journals/corr/abs-2006-05752, Dutta2021SlowAS} is to simply perform a gradient update when whenever a specified number, $k$, workers have returned.
The name ``asynchronous'' comes from the eponymous concept in distributed computing where communication rounds are not synchronized.

\subsubsection{General Coded Distributed Function Computation Schemes}
The main idea in \cite{Lee2018a}, \cite{NIPS2017_e6c2dc3d}, and \cite{8765375} is that one can distribute large matrix multiplications amongst workers and encode the smaller block matrix operations.
The works initiated much research in distributed coded matrix multiplication \cite{8765375, 8006963, 8437549, 8437852, pmlr-v80-wang18e, pmlr-v97-soto19a, 8758338, 8849395, hong21b}.
Further work has been extended to include batch matrix multiplication as well \cite{pmlr-v89-yu19b, 9149322, 9174239, 9750133}.
The main drawback of these (multi-)linear methods is the non-linear activation functions; in particular, these methods can only encode the linear computations between the layers of a network.
Another interesting approach is to attempt to encode the neural network itself \cite{DBLP:journals/corr/abs-1806-01259, 10.1145/3341301.3359654, Kosaian2019ParityMA}; however, this approach suffers from long training times dues to combinatorial explosion of different fault patterns is there are enough stragglers.

\subsection{Contribution}
The main contributions of this paper are to introduce a novel coding scheme for gradient descent that: allows for asynchronous gradient updates, maximizes the amount of information contained by random subsets of vectors, minimizes the weight of the code, compresses the gradient in a manner that scales well with the number of nodes, and achieves a lower a communication complexity and memory (storage) overhead with respect to the state of the art.
Another improvement of our algorithm over the state of the art is to consider the correct information metric; all of the other coding schemes assume that the Hamming distance is the correct metric, which does not consider the natural (differential) geometry of the gradient.
We will show that the correct distance is the one given by the \textit{real projective space}\footnote{$\mathbb{R P}^n$ is defined as the set of all vectors in $\mathbb{R }^{n+1}$ quotiented by the equivalence relation $v \sim w \iff (\exists \lambda)\ v = \lambda w$.} $\mathbb{R P}^n$.
Furthermore, we will show that our coding scheme, \emph{i.e.,} our choice of coefficients, maximizes the amount of information returned by the workers and furthermore has zero decoding overhead (in terms of multiplications) since the master can just directly add and subtract the results returned by the workers without needing to decode the information.

\subsection{Background}\label{subsec:back}
We quickly give some important definitions and background from coding theory, information theory, and geometry.
In coded distributed computing an \emph{erasure code} is a pair of functions $\mathcal{C} = (\mathcal{E}, \mathcal{D})$ where the workers tasks are given by the encoding procedure
\begin{equation*}
  \left\{ \tilde \theta_0 , ..., \tilde \theta_{n-1}\right\} :=  \mathcal{E} \left\{ \theta_0 , ...,  \theta_{k}\right\}
\end{equation*}
and a decoding procedure for some family of fault-tolerant subsets, $\mathcal{F}_\mathcal{C}$, such that
\begin{equation*}
  \left\{ \tilde \theta_{i_1} , ..., \tilde \theta_{i_m}\right\} \in \mathcal{F}_\mathcal{C} \implies \mathcal{D}\left\{ \tilde \theta_{i_1} , ..., \tilde \theta_{i_m}\right\} = \left\{ \theta_0 , ...,\theta_{k}\right\}.
\end{equation*}
If $\mathcal{F}_\mathcal{C}$ consists of all the $m$-subsets (for some integer $r$) of  $\left\{ \tilde \theta_0 , ..., \tilde \theta_{n-1}\right\}$, then $\mathcal{C}$ can correct any $r:= n-m$ erasures or stragglers; furthermore, if $r = n-k$ then the code is a \emph{maximum distance separable} (MDS) code.
If the encoder $\mathcal{E}$ is given by a generator matrix $\mathcal{G}_\mathcal{C}$, \emph{i.e.,} if
\begin{equation*}
  \mathcal{E} \begin{bmatrix}
    \theta_0 & ... & \theta_{k}
  \end{bmatrix}^T  =
  \mathcal{G}_\mathcal{C}\begin{bmatrix}
    \theta_0 & ... & \theta_{k}
  \end{bmatrix}^T
\end{equation*}
then $\mathcal{C}$ is called a \emph{linear code}. The \emph{weight} of a linear code is the maximum number of 0's in the rows of the matrix $\mathcal{G}_\mathcal{C}$; the importance of the weight metric stems from the fact that it measures the amount of work that the workers do since the rows of $\mathcal{G}_\mathcal{C}$ are the worker tasks $\tilde \theta_i$.
Thus, in order to avoid confusion we will use $t$ to denote the weight of the code as well as the number of \emph{tasks} that each worker does; equivalently $t$ is the number of data partitions on the workers.
To further simplify notation we abuse notation and use $\mathcal{C}$ in place of $\mathcal{G}_\mathcal{C}$ and $\mathcal{E}_\mathcal{C}$ when the context is clear.

A potential point of confusion is that the $\theta$ \emph{need not be} the weights $w$ of the $h_w$. This is because \emph{the derivative of loss function $\mathcal{L}$ also implicitly takes the data $D_i$ as an input;} this is an important insight used in all gradient coding algorithms. One of the key insights of this paper is to allow the coded gradient to be linear combinations of \textit{both} $\frac{\partial}{\partial D_i}$ \textit{and} the $\frac{\partial}{\partial w_i }$.
An important notational convention is that we let the $D_i$ be partitions (or batches) of the data set instead of just datapoints as is common in the gradient coding literature; in particular, $D_0,...,D_{t-1}$ denotes a partitioning of the data-set into $t$ pieces.

The reason for the name ``maximum distance separable code'' is that an MDS maximizes the distances between the codewords $\mathcal{E} \left\{ \theta_0 , ...,  \theta_{k}\right\}$ using the Hamming distance;
in particular, maximum distance separable means that the code words $\tilde \theta \in \mathcal{C}$ have achieve the maximum $\max _{\mathcal{C}': \text{code on }\Theta} \min_{\tilde \theta , \tilde \theta ' \in \mathcal{C}'}d(\tilde \theta , \tilde \theta ' )$ where $d$ is the Hamming distance.
There are two problems with this approach: the first is that MDS codes in this context require arbitrarily large amount of work, \emph{i.e.,} they have a large weight, and the second is that the classical \emph{discrete} MDS codes are using the wrong metric. This paper proposes to use the metric given by the projective geometry\footnote{See \cite{kuhnel2006differential} for the case $\mathbb{RP}^2$ and Appx.~3 of \cite{vogtmann2013mathematical} or Thm.~10.2 in ch.~3 of \cite{Suetin1989LinearAA} for the more general case $\mathbb{CP}^n$.} on the space of derivatives.
Here we mean maximum distance separable with respect to the distance function $d(\theta , \theta') =  \min\{\arccos \langle  \tilde \theta ,
  \tilde \theta '\rangle , \arccos \langle - \tilde \theta ,  \tilde \theta \rangle\}$.

\section{General Overview of the Design Principles}

Consider the case where there are two derivatives and we wish to create two parity tasks using only summation and subtraction in the encoding procedure.
Such a code is given by the following generator matrix
\begin{equation*}
  \mathcal{C}=\begin{blockarray}{cc}
    \ &  \matindex{$  \Theta_0 $}  \\
    \begin{block}{c[c]}
      \matindex{$\tilde \Theta_0$}  & I \\
      \matindex{$ \tilde \Theta_1$}  &  P  \\
    \end{block}
  \end{blockarray}=
  \begin{blockarray}{ccc}
    \  &  \matindex{$\theta_{0}$} & \matindex{$  \theta_{1}$} \\
    \begin{block}{c[cc]}
      \matindex{$\tilde \theta_{0}$}  &  1                 &  0  \\
      \matindex{$\tilde \theta_{1}$}  &  0                 & 1  \\
      \matindex{$\tilde \theta_{2}$}  & \frac{1}{\sqrt{2}} &  \frac{1}{\sqrt{2}} \\
      \matindex{$\tilde \theta_{3}$}  & \frac{1}{\sqrt{2}} &- \frac{1}{\sqrt{2}} \\
    \end{block}
  \end{blockarray}
\end{equation*}
which adds fault tolerance to the job
$
  I = \begin{bmatrix}
    1 & 0 \\
    0 & 1 \\
  \end{bmatrix}
$
with the \textit{parity} tasks
$
  P = \begin{bmatrix}
    \frac{1}{\sqrt{2}} & \frac{1}{\sqrt{2}}   \\
    \frac{1}{\sqrt{2}} & - \frac{1}{\sqrt{2}} \\
  \end{bmatrix}.
$
\textit{ This code has the serendipitous property of having negligible decoding complexity and negligible communication complexity!} For example, if the master receives $\nabla$ in the direction $\tilde \theta_3=\frac{1}{\sqrt{2}}(\theta_0+\theta_1)$, then the master can decrease both $ \theta_0$ and $\theta_1$ by the value returned by $\mathcal{W}_2$, \emph{i.e.,} $\tilde \theta _ 2$, if the master receives $\nabla$ in the direction $\tilde \theta_2=\frac{1}{\sqrt{2}}(\theta_0-\theta_1)$, then the master can decrease $ \theta_0$ and increase $\theta_1$  by the value returned by $\mathcal{W}_3$, \emph{i.e.,} $\tilde \theta _ 3$. \textit{The master need only perform 2 additions/subtractions, and more generally (see Eq.~\ref{eq:code}) if there are $t$ ``sub-tasks'' the master only needs to perform $t$ additions/subtractions. The multiplication by $\frac{1}{\sqrt{2}}$ can be subsumed by the learning rate; thus, our code has zero multiplication overhead.} Furthermore, this information can be communicated using only one float, since the master knows which direction/worker the derivative was computed from.

\subsection{What is the Optimal Choice of Directions?}

Looking at Fig.~\ref{fig:1}, we see that the code $\mathcal{G}$ is
MDS in the sense that it maximizes the independence between the vectors. Equivalently
\footnote{\label{fn:1}Under the assumption that the data $D_i$ are i.i.d.
  See \cite{Cover2006ElementsOI} for why maximal entropy maximizes information sent through a message.
}
$\mathcal{G}$ minimizes the confusion between codewords or minimizes the mutual information between codewords; thereby maximizing the entropy or the information content. As we will soon see this has the effect of allowing \textit{lossy low-distortion compression} for larger codes. A second contribution of this paper is to show how to preserve an \textit{approximate} MDS property for larger codes which allows for this form of compression.

In what sense does $\mathcal{C}$ being MDS imply fault tolerance? The following example illustrates one kind of error which the code is immune to:
\begin{figure}
  \centering
  \includegraphics[width=.48\textwidth]{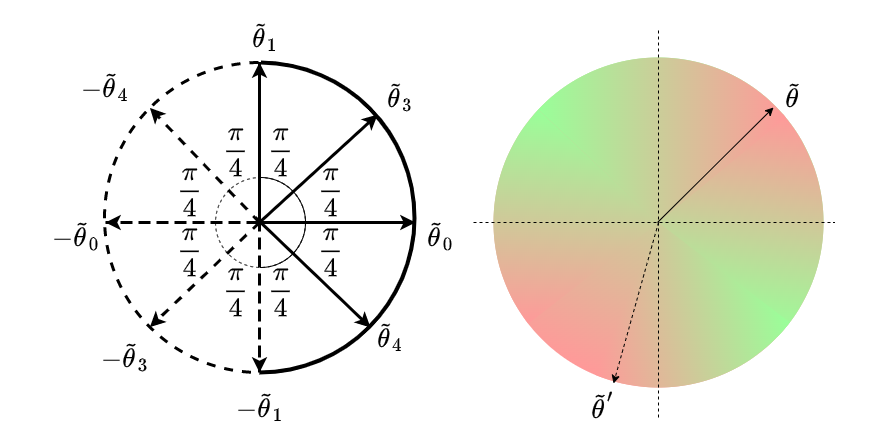}
  \caption{The code $\mathcal{C}$ is MDS with respect to the the distance between the codewords.
    It is easy to see that $- \tilde \theta_i$ carries the same information as $ \tilde \theta_i$ and it is therefore inefficient to include it since it is a form of replication coding. Likewise we should not include a vector that makes a small angle with $ \tilde \theta_i$ for the same reason.
    The geometry of this problem is that of $\mathbb{RP}^ 1$, \textit{i.e., the real protective line.} This is because the directions $\tilde \theta$ and $-\tilde \theta$ are information theoretically equivalent. The second figure displays the relationship between $\tilde \theta$ and $\tilde \theta ' $. As $\tilde \theta ' $ becomes more linearly independent from $\tilde \theta$ it begins to carry more novel information that cannot be inferred from $\tilde \theta$. At the ``greenest'' extremes $\tilde \theta$ and $\tilde \theta ' $ become statistically independent which is the maximum entropy configuration.
    Maximum entropy is equivalent (see Fn.~\ref{fn:1}) to maximum information about the $\theta_i$.
  }\label{fig:1}
\end{figure}

Consider the case where two workers return the derivatives in the directions
$
  \tilde \theta  = \begin{pmatrix} \frac{1}{\sqrt{2}} & \frac{1}{\sqrt{2}}\end{pmatrix}^T, \tilde \theta ' = \begin{pmatrix} \cos\left(\frac{5\pi}{4}- \epsilon\right) & \sin\left(\frac{5\pi}{4}- \epsilon\right)\end{pmatrix}^T.
$
By inspecting the second diagram in Fig.~\ref{fig:1}, it is easy to see that
$
  \lim_{\epsilon \to 0}  \tilde \theta '  \to  - \tilde \theta,
$
so that
$
  \lim_{\epsilon \to 0}  \arccos \left\langle  \tilde \theta ,  \tilde \theta '\right\rangle \to  \pi .
$
We will also show that the error in the derivative can get bigger and bigger as $\epsilon \to 0 $.

If worker one computes $  \frac{\partial \mathcal{L}}{\partial \tilde \theta}$, worker two computes $  \frac{\partial \mathcal{L}}{\partial \tilde \theta'}$, and
$
  \frac{\partial l}{\partial \theta_i}   = (-1)^{i}L,
$
then we have that
$
  \frac{\partial \mathcal{L}}{\partial \tilde \theta} = \frac{1}{\sqrt{2}}(L+(-L))=0
$
and that
$
  \frac{\partial \mathcal{L}}{\partial \tilde \theta'}= \cos\left(\frac{5\pi}{4}- \epsilon\right)L-\sin\left(\frac{5\pi}{4}- \epsilon\right)L= \left(\cos\left(\frac{5\pi}{4}- \epsilon\right)-\sin\left(\frac{5\pi}{4}- \epsilon\right)\right)L.
$
Therefore if both
$
  ( \epsilon \approx 0) $ and $( L >>1)
$,
then
$
  \frac{\partial \mathcal{L}}{\partial \tilde \theta} \approx\frac{\partial \mathcal{L}}{\partial \tilde \theta'}\approx 0 ,
$
which is an error; \textit{when the master receives the messages from workers one and two she will think she has arrived at a optimal fit since both $\frac{\partial \mathcal{L}}{\partial \tilde \theta} $ and $\frac{\partial \mathcal{L}}{\partial \tilde \theta'} $ are very small; \emph{i.e.,} the master may halt the algorithm on a terrible fit.} Furthermore it is easy to see that
$
  \max_{\epsilon }\frac{\partial \mathcal{L}}{\partial \tilde \theta'}
$
occurs when $\epsilon= \frac{\pm \pi}{2}$, \emph{i.e.,} when
$
  \arccos \left\langle  \tilde \theta ,  \tilde \theta '\right\rangle = \frac{\pi}{2}
$
so that our previous choice is optimal.

\subsection{Motivating Example: Base Case of LWPD Codes}\label{subsec:mot_ex}

The last example did not allow us to show the more general \textit{lossy compression} phenomenon that can occur for more general codes. Also we will soon prove that it is impossible to have MDS codes for large dimensions where the workers perform a small amount of work\footnote{This follows from a general rule of thumb in coding theory which states that a code cannot simultaneously have a sparse matrix and be MDS; however we will prove it rigorously for our case.}. In a sense, $k = 2$ is a very special case. Therefore before showing the general compression phenomenon, let us show how to \textit{compress the derivative} for $k = 4$.

Suppose that we have 8 workers $\tilde \theta_i$, loss function $l(h,y)=\frac{1}{2}||h-y||^2$, $u$ input features $x_i$, and space of hypothesis functions
\begin{equation*}
  \mathcal{H} = \left\lbrace h_{w} = (y_1,...,y_v)  \middle| y_i = \frac{e^{w_i^Tx}}{1 + \sum_j^{w_j^Tx}}\ , \ w_i \in \mathbb{R}^2 \right\rbrace,
\end{equation*}
where $w_j^Tx = w_{j,1}x_1 + ... + w_{j,u}x_u$; \emph{i.e.,} $\mathcal{H}$ is the space of multinomial logistic regression functions (\emph{however, this procedure will work for any feed-forward deep neural network}, see Fig.~\ref{fig:weight_partition}).
Similarly to the previous design we can give the following directions to the workers
\begin{equation}\label{eq:basecode}
  \mathcal{C}^{(8,4,2)} =  \begin{blockarray}{ccccc}
    \ &  \matindex{$\theta_{0}$} & \matindex{$ \theta_{1}$} & \matindex{$ \theta_{2}$} & \matindex{$  \theta_{3}$} \\
    \begin{block}{c[cccc]}
      \matindex{$\tilde \theta_{0}$}
      & \frac{1}{\sqrt{2}} &  \frac{1}{\sqrt{2}}  & 0& 0 \\
      \matindex{$\tilde  \theta_{1}$}
      &  \frac{1}{\sqrt{2}} & -\frac{1}{\sqrt{2}} & 0 &0  \\
      \matindex{$\tilde  \theta_{2}$}
      &  0 & 0 & \frac{1}{\sqrt{2}} &\frac{1}{\sqrt{2}}\\
      \matindex{$\tilde  \theta_{3}$}
      &  0 & 0 & \frac{1}{\sqrt{2}} &-\frac{1}{\sqrt{2}}\\
      \matindex{$\tilde \theta_{4}$}
      & 0 & \frac{1}{\sqrt{2}} &  \frac{1}{\sqrt{2}}  & 0\\
      \matindex{$\tilde  \theta_{5}$}
      & 0  &  \frac{1}{\sqrt{2}} & -\frac{1}{\sqrt{2}} &0 \\
      \matindex{$\tilde  \theta_{6}$}
      &\frac{1}{\sqrt{2}} &  0 & 0 & \frac{1}{\sqrt{2}}  \\
      \matindex{$\tilde  \theta_{7}$}
      &-\frac{1}{\sqrt{2}} &  0 & 0 & \frac{1}{\sqrt{2}}   \\
    \end{block}
  \end{blockarray};
\end{equation}
however, this time we let $\frac{\partial}{\partial \theta_0} = $``the derivative of the first half of the output nodes with respect to the first half of the dataset'', $\frac{\partial}{\partial \theta_1} = $``the derivative of the second half of the output nodes with respect to the first half of the dataset'', $\frac{\partial}{\partial \theta_2} = $``the derivative of the first half of the output nodes with respect to the second half of the dataset'', and $\frac{\partial}{\partial \theta_3} = $``the derivative of the second half of the output nodes with respect to the second half of the dataset''.
We can see that ``lossy'' part of the lossy compression is that the 4 workers don't necessarily return the
gradient perfectly, but we will later prove that they return a pretty good approximation of it. However, we had a second further lossy compression step to our code; we give workers 5 and 6 the data partitions $D_2, D_3$ and give workers 7 and 8 the data partitions $D_1, D_4$ instead of giving these workers all of the partitions as the code in Eq.~\ref{eq:basecode} suggests.
\begin{figure}
  \centering
  \includegraphics[width=.41\textwidth]{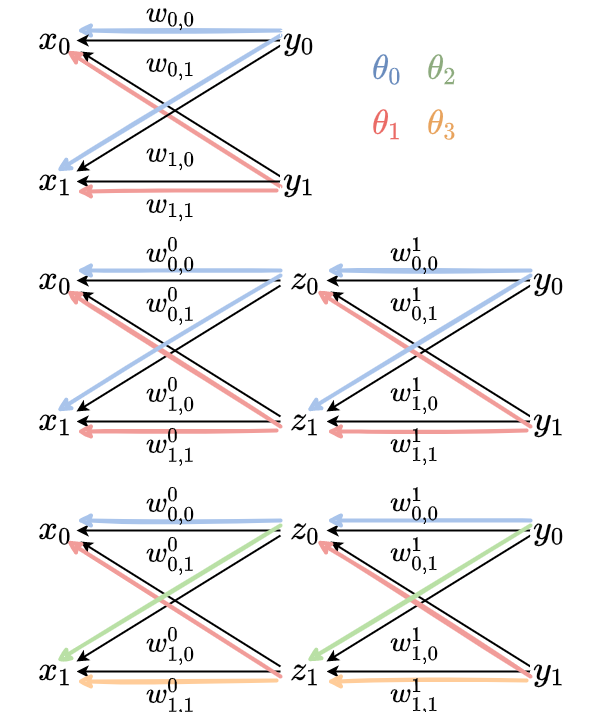}
  \caption{The different ways to partition the backpropagation gradient. The first partition shows how to partition the gradient for a simple neural network with no hidden nodes. The two other partition corresponds to a more general deep-neural network with hidden nodes. The last partition shows how to apply the recursive step in Alg.~\ref{alg:grad_part}.}\label{fig:weight_partition}
\end{figure}

\section{General Code Construction}

We first show how to construct what we will denote as a \textit{$[n,k,t]$-projective derivative code}, or $[n,k,t]$-code, for $n=2^m$, $k = 2^l$ and $t=2^p$, \emph{i.e.,} $n,k,t$ are all powers of 2, and then show how to use ``cyclic'' and ``toroidal'' permutations to construct the code for more general $k,n$; however, the $t$ is always chosen to be a power of two for reasons that will soon become clear. The parameter $n$ is the number of workers, $k$ the number of derivatives/features, and $t$ is the number of sub-tasks that each worker will perform; \emph{i.e.,} the number of derivatives per worker.

\subsection{The Characteristic Vectors}
To construct the code we first construct \textit{the characteristic vectors} from the following family of functions
$
  \chi_{\alpha} : \mathbb{F}^p_2 \longrightarrow \mathbb{C},
$
defined by the lambda expression
\begin{equation*}
  \chi_{\alpha} : \beta \mapsto  \frac{1}{\sqrt{2^p}}e^{ \left\langle \alpha , \beta  \right\rangle \pi i}  = \frac{(-1)^{ \left\langle \alpha , \beta  \right\rangle }}{\sqrt{2^p}},
\end{equation*}
where $\alpha, \beta \in \mathbb{F}^p_2$ are defined as binary strings of length $t$ and $  \left\langle \alpha , \beta  \right\rangle $ is the \textit{dot product} on $\alpha, \beta \in \mathbb{F}^p_2$, which is equivalent to taking the \textit{bit-wise }AND\footnote{AND is the \textit{logical conjunction}, denoted by $\land$,
  \textit{i.e.,} $a \land b$ = 1 if $a = b = 1$ and $a \land b$ = 0 otherwise. The bit-wise AND of two sequences $\alpha = \alpha_0...\alpha_{p-1},\beta = \beta_0...\beta_{p-1}$ is $\alpha \land \beta  = (\alpha_0 \land \beta _0)(\alpha_1 \land \beta _1) ...(\alpha_{p-1} \land \beta_{p-1}) $} of $\alpha$ and $\beta$ and then taking the XOR\footnote{\label{fn:2}XOR is the \textit{exclusive or}, denoted by $\oplus$,
  \textit{i.e.,} $a \oplus b$ = 1 if $a \neq b$ and $a \oplus b$ = 0 if $a = b$.} of the result. In particular, $  \left\langle \alpha , \beta  \right\rangle $ is defined as
$
  \left\langle \alpha , \beta  \right\rangle  = (\alpha _0 \land \beta _0) \oplus  (\alpha_1 \land \beta _1) \oplus \dots\oplus   (\alpha _{p-1} \land \beta _{p-1}),
$ where $\oplus$ is as defined in fn.~\ref{fn:2}.

It is an elementary fact of representation theory that the vectors, $\sqrt{2^p}\chi_\alpha$ correspond to the \textit{irreducible representations} of $\mathbb{F}^p_2$ in $\mathbb{C}$ and are therefore an orthogonal basis see Thm.~6 of \cite{scott2012linear} or Thm.~2.12 of \cite{fulton1991representation}. One can also prove this fact by direct computation using discrete Fourier analysis see ch.~4 of \cite{tao2006additive}. These functions are well-studied in discrete mathematics and usually referred to as the (additive) characters of $\mathbb{F}^p_2$.

Let us construct these vectors for $p= 2$ and verify the veracity of these statements for that case. The binary strings of length $2$ are $ \alpha \in \{00,01,10,11\} $ and this corresponds to the functions


\begin{equation*}
  \resizebox{\hsize}{!}{$
      X^{4}
      =  \!\!\!  \begin{blockarray}{ccccc}
        \ &  \matindex{$\theta_0$} & \matindex{$\theta_1$} & \matindex{$\theta_2$} & \matindex{$\theta_3$}   \\
        \begin{block}{c@{\hspace{5.6pt}}[c@{\hspace{5.6pt}}c@{\hspace{5.6pt}}c@{\hspace{5.6pt}}c]}
          \matindex{$\chi_{00}$}
          & (-2)^{\frac{-p}{2} \left\langle 00 , 00  \right\rangle }
          & (-2)^{\frac{-p}{2} \left\langle 00 , 01  \right\rangle }
          & (-2)^{\frac{-p}{2} \left\langle 00 , 10  \right\rangle }
          & (-2)^{\frac{-p}{2} \left\langle 00 , 11  \right\rangle }\\
          \matindex{$\chi_{01}$}
          & (-2)^{\frac{-p}{2} \left\langle 01 , 00  \right\rangle }
          & (-2)^{\frac{-p}{2} \left\langle 01 , 01  \right\rangle }
          & (-2)^{\frac{-p}{2} \left\langle 01 , 10  \right\rangle }
          & (-2)^{\frac{-p}{2} \left\langle 01 , 11  \right\rangle }\\
          \matindex{$\chi_{10}$}
          & (-2)^{\frac{-p}{2} \left\langle 10 , 00  \right\rangle }
          & (-2)^{\frac{-p}{2} \left\langle 10 , 01  \right\rangle }
          & (-2)^{\frac{-p}{2} \left\langle 10 , 10  \right\rangle }
          & (-2)^{\frac{-p}{2} \left\langle 10 , 11  \right\rangle }\\
          \matindex{$\chi_{11}$}
          & (-2)^{\frac{-p}{2} \left\langle 11 , 00  \right\rangle }
          & (-2)^{\frac{-p}{2} \left\langle 11 , 01  \right\rangle }
          & (-2)^{\frac{-p}{2} \left\langle 11 , 10  \right\rangle }
          & (-2)^{\frac{-p}{2} \left\langle 11 , 11  \right\rangle }\\
        \end{block}
      \end{blockarray}\!
    $}
\end{equation*}
\begin{equation*}
  =\!\!\! \begin{blockarray}{ccccc}
    \ &  \matindex{$\theta_0$} & \matindex{$\theta_1$} & \matindex{$\theta_2$} & \matindex{$\theta_3$}   \\
    \begin{block}{c@{\hspace{5.6pt}}[c@{\hspace{5.6pt}}c@{\hspace{5.6pt}}c@{\hspace{5.6pt}}c]}
      \matindex{$\chi_{00}$}
      & 1/2 & 1/2  & 1/2 &1/2\\
      \matindex{$\chi_{01}$}
      & 1/2 &   -1/2 & 1/2 & -1/2\\
      \matindex{$\chi_{10}$}
      & 1/2 & 1/2  & -1/2 & -1/2 \\
      \matindex{$\chi_{11}$}
      & 1/2 &  -1/2 & -1/2 & 1/2  \\
    \end{block}
  \end{blockarray},
\end{equation*}

and it is straightforward to see that all of the vectors $\chi_{00}$, $\chi_{01}$, $\chi_{10}$, and $\chi_{11}$ are orthonormal.

\subsection{Construction for Powers of Two}\label{subsec:pow_two_con}
If we identify the binary strings $\alpha,\beta$ with the integers that they represent and let  $X^{(t)}$ be the matrix defined coordinate-wise by the equation
\begin{equation*}
  X^{(t)}_{\alpha,\beta} = \chi_\alpha(\beta) =  \frac{1}{\sqrt{t}}e^{ \left\langle \alpha , \beta  \right\rangle \pi i},
\end{equation*}
where $p = \log(t)$ and $ \chi_{\alpha} : \mathbb{F}^p_2 \longrightarrow \mathbb{C}$ and let $L^{(t)}$ and  $R^{(t)}$ be the matrices defined by the equation
\begin{equation*}
  L^{(2t)} = \begin{bmatrix}
    0^{\left(t \right)} & \frac{1}{\sqrt{2}}X^{\left( t \right)} \\
    0^{\left(t \right)} & \frac{1}{\sqrt{2}}X^{\left( t \right)} \\
  \end{bmatrix}, \ \ \
  R^{(2t)}  =
  \begin{bmatrix}
    \frac{1}{\sqrt{2}}X^{\left( t \right)}   & 0^{\left(t \right)} \\
    - \frac{1}{\sqrt{2}}X^{\left( t \right)} & 0^{\left(t \right)} \\
  \end{bmatrix},
\end{equation*}
then we can define $\mathcal{C}^{(2k,k,t)} $, the generator for the $[2k,k,t]$-code, as
\begin{equation}\label{eq:code}
  \begin{blockarray}{ccccccc}
    \ &   \matindex{$ \theta^{(t)}_0$}
    &  \matindex{$\theta^{(t)}_1$}
    & \matindex{$\theta^{(t)}_2$}
    & \dots
    &  \matindex{$\theta^{(t)}_{s-2}$}
    & \matindex{$\theta^{(t)}_{s-1}$}\\
    \begin{block}{c[cccccc]}
      \matindex{$\tilde  \theta^{(t)}_0$}
      & X^{(t)} &  0       & 0       & \hdots &  0    & 0      \\
      \matindex{$\tilde  \theta^{(t)}_1$}
      &  0      & X^{(t)}  & 0       & \hdots & 0     & 0      \\
      \matindex{$\tilde \theta^{(t)}_2$}
      & 0       &  0       & X^{(t)} & \hdots &\hdots &0       \\
      \matindex{$ \vdots$}
      &  \vdots & \vdots   &\vdots   &\ddots  & \vdots&\vdots  \\
      \matindex{$ \tilde \theta^{(t)}_{s-2} $}
      &  0      & 0        & 0       &\hdots  &X^{(t)}&0       \\
      \matindex{$ \tilde \theta^{(t)}_{s-1}$}
      &  0      & 0        & 0       &\hdots  &0      &X^{(t)} \\
      \matindex{$\tilde \theta^{(t)}_{s}$}
      & L^{(t)} &  R^{(t)} &  0      & \hdots & 0     &0       \\
      \matindex{$\tilde  \theta^{(t)}_{s+1}$}
      &  0      &  L^{(t)} & R^{(t)} & \hdots &0      &0       \\
      \matindex{$ \vdots$}
      &  \vdots & \vdots   &\vdots   &\ddots  & \vdots& \vdots \\
      \matindex{$ \tilde\theta^{(t)}_{2s-2} $}
      &  0      & 0        & 0       &\hdots  &L^{(t)}&R^{(t)} \\
      \matindex{$\tilde \theta^{(t)}_{2s-1}$}
      & R^{(t)} & 0        & 0       &\hdots  &0      &L^{(t)} \\
    \end{block}
  \end{blockarray},
\end{equation}
where\footnote{Equivalently, we can write these definitions as $s = \frac{k}{t}$ and $ \theta^{(t)}_{i} =   \theta_{it}\theta_{it+1} ... \theta_{(i+1)t-1} $} $s$ is the ratio of tasks to sub-tasks, $ \theta^{(t)}_{i} $ is the sequence of sub-tasks $\theta_{it}$ through $\theta_{(i+1)t-1}$, and $ \tilde \theta^{(t)}_{i}$ is  similarly defined as a sequence of $t$ consecutive workers. Equivalently if we define the ```rectangles'' $\mathcal{R}^{(t)}_{u,v}$ as
\begin{equation*}
  \mathcal{R}_{u,v}^{(t)} = \{(i,j) \in \mathbb{N}^2 \ | \  ut \leq i < (u+1)t ,  \  vt \leq j < (v+1)t  \},
\end{equation*}
then we can define $\mathcal{C}^{(2k,k,t)} $ coordinate -wise as
\begin{equation*}
  \resizebox{\hsize}{!}{$
      \mathcal{C}^{(2k,k,t)}_{\theta _i , \tilde \theta _j} =
      \begin{cases}
        X^{(t)}_{i \% t,j\% t } & \text{if } i < k \text{ and } ( i,j)  \in \mathcal{R}_{\left\lfloor\frac{i}{t} \right\rfloor,\left\lfloor\frac{i}{t} \right\rfloor}^{(t)}               \\
        L^{(t)}_{i \% t,j\% t } & \text{if } k\leq i \text{ and }( i,j)  \in \mathcal{R}_{\left\lfloor\frac{i}{t} \right\rfloor+\frac{k}{t},\left\lfloor\frac{i}{t} \right\rfloor}^{(t)}  \\
        R^{(t)}_{i \% t,j\% t } & \text{if }( i,j)  \in \mathcal{R}_{\frac{2k}{t}-1,0}^{(t)} \text{ or } k\leq i                                                                          \\
                                & \text{ and }t\leq j \text{ and }( i,j)  \in \mathcal{R}_{\left\lfloor\frac{i}{t} \right\rfloor+\frac{k}{t},\left\lfloor\frac{i}{t} \right\rfloor}^{(t)} \\

        0                       & \text{otherwise}.                                                                                                                                       \\
      \end{cases}
    $}
\end{equation*}

It is straightforward to prove the following beautiful property
\begin{lemma}\normalfont\label{lem:tensor}
  The matrices $X^{(t)}$ satisfy the following recursion relation $X^{(2t)} = X^{(2)} \otimes X^{(t)}  $.
\end{lemma}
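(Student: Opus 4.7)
The plan is to prove the identity coordinate-wise by parsing the binary indices into a leading bit and a tail, and then observing that the character inner product splits additively on $\mathbb{F}_2$.

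First, I would recall that $X^{(2t)}$ has rows and columns indexed by $\mathbb{F}_2^{p+1}$, where $2t = 2^{p+1}$. Given any indices $\alpha, \beta \in \mathbb{F}_2^{p+1}$, write $\alpha = a \alpha'$ and $\beta = b \beta'$, where $a, b \in \mathbb{F}_2$ are the leading bits and $\alpha', \beta' \in \mathbb{F}_2^{p}$ are the tails. By the definition of the dot product given just before the lemma, the inner product splits as
\[
\langle \alpha, \beta \rangle \;=\; (a \wedge b) \,\oplus\, \langle \alpha', \beta' \rangle.
\]
The next step is to translate this through the exponential: since $\oplus$ is addition in $\mathbb{F}_2$, we have $(-1)^{x \oplus y} = (-1)^x (-1)^y$, hence
\[
X^{(2t)}_{\alpha,\beta} \;=\; \frac{(-1)^{\langle \alpha,\beta\rangle}}{\sqrt{2t}} \;=\; \frac{(-1)^{a \wedge b}}{\sqrt{2}} \cdot \frac{(-1)^{\langle \alpha',\beta'\rangle}}{\sqrt{t}} \;=\; X^{(2)}_{a,b} \, X^{(t)}_{\alpha',\beta'}.
\]

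Finally I would match this to the Kronecker product. Under the standard identification that writes an index in $\{0,\dots,2t-1\}$ as a binary string of length $p+1$ with the most significant bit first, the $(\alpha,\beta)$-entry of $X^{(2)} \otimes X^{(t)}$ is precisely $X^{(2)}_{a,b} X^{(t)}_{\alpha',\beta'}$. Combining the two displays gives $X^{(2t)} = X^{(2)} \otimes X^{(t)}$ entry-by-entry, which is the claim.

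The only place requiring care is the index-to-binary bookkeeping: one must be consistent about whether the leading bit corresponds to the ``outer'' factor $X^{(2)}$ or to $X^{(t)}$ in the Kronecker product. I do not expect this to be a genuine obstacle, since it is a matter of convention and can be fixed by choosing the natural big-endian encoding from the start. Everything else is immediate from the bilinearity of $\langle \cdot,\cdot\rangle$ over $\mathbb{F}_2$ and the multiplicativity of $(-1)^{(\cdot)}$.
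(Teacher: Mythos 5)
Your proof is correct, and it is genuinely more self-contained than the paper's, which disposes of the lemma in one line by citing Thm.~10 of \cite{scott2012linear} (the standard fact that the character table of a direct product of abelian groups is the Kronecker product of the character tables --- here $\mathbb{F}_2^{p+1} \cong \mathbb{F}_2 \times \mathbb{F}_2^{p}$). Your computation is exactly the proof of that cited fact specialized to this case: the splitting $\langle \alpha,\beta\rangle = (a\wedge b)\oplus\langle\alpha',\beta'\rangle$ is the statement that a character of the product group restricts to a product of characters of the factors, and the step $(-1)^{x\oplus y}=(-1)^x(-1)^y$ together with $\frac{1}{\sqrt{2t}}=\frac{1}{\sqrt{2}}\cdot\frac{1}{\sqrt{t}}$ handles the normalization. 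Your concern about endianness is the right one to flag, and it resolves as you expect: the paper identifies binary strings with the integers they represent, so the leading bit selects the block of size $t$ and therefore corresponds to the outer factor $X^{(2)}$, matching the standard Kronecker convention. What your route buys is a verifiable, elementary argument that does not require the reader to unpack a representation-theory reference; what the paper's route buys is brevity and the contextual observation (also made right after the lemma) that this is the same mechanism by which tensor products of Hadamard matrices are Hadamard.
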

\begin{proof}
  This is a direct consequence of Thm.~10 in \cite{scott2012linear}.
\end{proof}
An alternative is the weaker statement\footnote{Although a weaker statement it suffices to to prove the claim of optimally,\emph{i.e.,} Thm.~\ref{thm:opt}.} ``$X^{(2)}$ is a Hadamard matrix and the tensor product of two Hadamard matrices is a Hadamard matrix'' whose proof can be found in \cite{2003fundamentals}.

\subsubsection{Data-\&-Gradient-Partition for Powers of Two}
Similar to the example given in Sec.~\ref{subsec:mot_ex} we give the workers $\tilde \theta_i $ the data partition given by Alg.~\ref{alg:data_part}.
\begin{algorithm}
  \caption{Data\_Partition\_Assignment}
  \label{alg:data_part}
  \begin{algorithmic}
    \STATE {\bfseries Input:} \texttt{data }$D$,
    \texttt{code\_parameters} $(n,k,t)$
    \STATE Partition the data $D$ into $D_0, .., D_{k-1}$
    \STATE Set $\mathcal{C} := \mathcal{C} ^{(n,k,t)}$
    \FOR{$i \leq n$}
    \STATE \texttt{Data}$[ \tilde \theta_i] :=\emptyset$
    \FOR{$j \leq k$}
    \IF{$\mathcal{C}_{i,j} \neq 0 $}
    \STATE Set \texttt{Data}$[ \tilde \theta_i]:=\texttt{Data}[\tilde \theta_i]\cup D_j$
    \ENDIF
    \ENDFOR
    \ENDFOR
  \end{algorithmic}
\end{algorithm}
The idea behind Alg.~\ref{alg:data_part} is simple; we give the first worker \texttt{Data}$[ \tilde \theta_0]= D_0,...,D_{t-1}$, and the second worker \texttt{Data}$[ \tilde \theta_1]= D_t,...,D_{2t-1}$, and so on up to worker $k$, at which point we give the workers $k,...,n$ a cyclic shift of the previous assignment, \emph{e.g,} worker $k$ gets \texttt{Data}$[ \tilde \theta_k]= D_{\frac{t}{2}},...,D_{t+\frac{t}{2}}$.

The procedure for partitioning and encoding the gradients, Alg.~\ref{alg:grad_part}, is slightly more involved; however, the main idea is illustrated in Fig.~\ref{fig:weight_partition}.
\begin{algorithm}
  \caption{Gradient\_Partition\_Assignment}
  \label{alg:grad_part}
  \begin{algorithmic}
    \STATE {\bfseries Input:} \texttt{network} $x,z^0,...,z^m,y$, \texttt{code\_parameters} $(n,k,t)$
    \STATE Set $\mathcal{C} := \mathcal{C} ^{(n,k,t)}$
    \STATE Partition $y$ into $t $ groups $y^{(i)}$
    \STATE where $y^{(0)}=(y_0,...,y_{t-1});\dots ; y^{(t)}=(y_{v-t},...,y_{v})$
    \FOR{$i \leq n$}
    \STATE Encode \texttt{grad}$[\tilde \theta_i]$ according to row $i$ in $\mathcal{C}$ as in Fig.~\ref{fig:weight_partition}
    \ENDFOR
    \IF{\texttt{network} == $x,y$}   \comment{Base Case}
    \STATE End Procedure
    \ELSE \comment{Induction Step}
    \FOR{$i \leq m$}
    \STATE Recursively call ``Gradient\_Partition\_Assignment'' on the network $x,z^0,...,z^m$ parameters $(n,k,t)$ as in Fig.~\ref{fig:weight_partition} to encode \texttt{grad}$[\tilde \theta_i]$ according to row $i$ in $\mathcal{C}$ by repeatedly splitting the (non-zero-)row by $t$
    \ENDFOR
    \ENDIF
  \end{algorithmic}
\end{algorithm}
The main intuition behind Alg.~\ref{alg:grad_part} is to encode the gradient in the \emph{manner in which backpropagation occurs;} this allows for the iterative decoding/gradient update at the master node, which in turn allows for asynchronous gradient updating.
\subsection{Construction for General Parameters}

Given some general $(n,k,t) $ we construct the matrix $\mathcal{C}^{(n',k',t)} $, where $n'$ and $k'$ are the next nearest powers of 2 (repeating rows if necessary) and use a ``2-D'' permutation algorithm similar to \cite{9213028} to distribute the sub-tasks in each round; however our algorithm uses more general (prime number) step-sizes chosen in each round and the permutations now occur in ``higher dimensions\footnote{\emph{I.e.,} our algorithm permutes more than one index; in particular, it permutes the subtask, worker, and output indices in order to create ``3-D'' permutations. The step-sizes are more general because they must be co-prime to one another to ensure every blue rectangle (see Eq.~\ref{eq:spin}) is visited.}.''
In particular; we now use a similar procedure to permute tasks amongst workers if $n$ and $k$ are not powers of 2.
For example if we have $n=6$ workers and $k=3$ tasks we can add extra virtual tasks $\theta_3 = \theta_0,$ $\theta_4 = \theta_1,$ $\dots,$ $\theta_{x} = \theta_{x\%3}$ and perform the following toroidal permutations on  $\mathcal{C}^{(5,3,2)}$
\begin{figure*}
  \begin{equation}\label{eq:spin}\begin{gathered}
      \begin{blockarray}{cccccccccc}
        \ & &  \matindex{$\tilde \vartheta_{0}$} & \matindex{$ \tilde \vartheta_{1}$} & \matindex{$\tilde \vartheta_{2}$} & \matindex{$\tilde \vartheta_{3}$} &  \matindex{$\tilde \vartheta_{4}$} &  & & \\
        \ & &  \matindex{$\tilde \theta_{0}$} & \matindex{$ \tilde \theta_{1}$} & \matindex{$\tilde \theta_{2}$} & \matindex{$\tilde \theta_{3}$} &  \matindex{$\tilde \theta_{4}$} & \matindex{$ \tilde \theta_{5}$} & \matindex{$\tilde \theta_{6}$} & \matindex{$\tilde \theta_{7}$}\\
        \begin{block}{c@{\hspace{3pt}}c[@{\hspace{3pt}}c@{\hspace{3pt}}c@{\hspace{3pt}}c@{\hspace{3pt}}c@{\hspace{3pt}}c@{\hspace{3pt}}c@{\hspace{3pt}}c@{\hspace{3pt}}c]}
          \matindex{$t_{0}$}& \matindex{$\theta_{0}$}
          & \color{blue}1 & \color{blue} 1  & \color{blue}0& \color{blue}0&\color{blue}0&0&1&  -1\\
          \matindex{$t_{1}$}& \matindex{$\theta_{1}$}
          &  \color{blue}1 & \color{blue}-1 & \color{blue}0 &\color{blue}0&\color{blue}1&1&0&0 \\
          \matindex{$t_{2}$}& \matindex{$\theta_{2}$}
          &  \color{blue}0 & \color{blue}0 & \color{blue}1 &\color{blue}1&\color{blue}1&-1&0&0 \\
          & \matindex{$\theta_{3}$}
          &  0 & 0 & 1 &-1&0&0&1&1 \\
        \end{block}
      \end{blockarray} \Rightarrow   \begin{blockarray}{cccccccccc}
        \ & &\matindex{$\tilde \vartheta_{3}$}  &\matindex{$\tilde \vartheta_{4}$}   &  & &  &\matindex{$\tilde \vartheta_{0}$}   & \matindex{$ \tilde \vartheta_{1}$}& \matindex{$\tilde \vartheta_{2}$}  \\
        \ & &  \matindex{$\tilde \theta_{0}$} & \matindex{$ \tilde \theta_{1}$} & \matindex{$\tilde \theta_{2}$} & \matindex{$\tilde \theta_{3}$} &  \matindex{$\tilde \theta_{4}$} & \matindex{$ \tilde \theta_{5}$} & \matindex{$\tilde \theta_{6}$} & \matindex{$\tilde \theta_{7}$}\\
        \begin{block}{c@{\hspace{3pt}}c[@{\hspace{3pt}}c@{\hspace{3pt}}c@{\hspace{3pt}}c@{\hspace{3pt}}c@{\hspace{3pt}}c@{\hspace{3pt}}c@{\hspace{3pt}}c@{\hspace{3pt}}c]}
          & \matindex{$\theta_{0}$}
          & 1 &1  &0& 0&0& 0&1& -1\\
          \matindex{$t_{0}$}& \matindex{$\theta_{1}$}
          &  \color{blue}1 & \color{blue}-1 &0 &0&1& \color{blue}1&\color{blue}0&\color{blue}0 \\
          \matindex{$t_{1}$}& \matindex{$\theta_{2}$}
          & \color{blue} 0 & \color{blue}0 & 1 &1&1&\color{blue}-1&\color{blue}0&\color{blue}0 \\
          \matindex{$t_{2}$} & \matindex{$\theta_{3}$}
          & \color{blue} 0 &\color{blue}0 &1 &-1&0& \color{blue}0&\color{blue}1&\color{blue}1 \\
        \end{block}
      \end{blockarray}  .
      \\
      \begin{blockarray}{cccccccccc}
        \ & &   & & \matindex{$\tilde \vartheta_{0}$}  &\matindex{$\tilde \vartheta_{1}$} &  \matindex{$\tilde \vartheta_{2}$} & \matindex{$ \tilde \vartheta_{3}$} & \matindex{$ \tilde \vartheta_{4}$}&   \\
        \ & &  \matindex{$\tilde \theta_{0}$} & \matindex{$ \tilde \theta_{1}$} & \matindex{$\tilde \theta_{2}$} & \matindex{$\tilde \theta_{3}$} &  \matindex{$\tilde \theta_{4}$} & \matindex{$ \tilde \theta_{5}$} & \matindex{$\tilde \theta_{6}$} & \matindex{$\tilde \theta_{7}$}\\
        \begin{block}{c@{\hspace{3pt}}c[@{\hspace{3pt}}c@{\hspace{3pt}}c@{\hspace{3pt}}c@{\hspace{3pt}}c@{\hspace{3pt}}c@{\hspace{3pt}}c@{\hspace{3pt}}c@{\hspace{3pt}}c]}
          \matindex{$t_{2}$}& \matindex{$\theta_{4}$}
          & 1 &  1  &  \color{blue} 0& 0&\color{blue}0&\color{blue}0&\color{blue}1& -1\\
          & \matindex{$\theta_{1}$}
          &  1 & -1 & 0 &0&1&1&0&0 \\
          \matindex{$t_{0}$}& \matindex{$\theta_{2}$}
          & 0 & 0 & \color{blue} 1 &1&\color{blue}1&\color{blue}-1&\color{blue}0&0 \\
          \matindex{$t_{1}$}& \matindex{$\theta_{3}$}
          & 0 & 0 &  \color{blue}1 &-1&\color{blue}0&\color{blue}0&\color{blue}1&1 \\
        \end{block}
      \end{blockarray} \Rightarrow  \begin{blockarray}{cccccccccc}
        \ & &  \matindex{$\tilde \vartheta_{1}$} &  \matindex{$\tilde \vartheta_{2}$} & \matindex{$\tilde \vartheta_{3}$} & \matindex{$\tilde \vartheta_{4}$} &  &  & & \matindex{$ \tilde \vartheta_{0}$} \\
        \ & &  \matindex{$\tilde \theta_{0}$} & \matindex{$ \tilde \theta_{1}$} & \matindex{$\tilde \theta_{2}$} & \matindex{$\tilde \theta_{3}$} &  \matindex{$\tilde \theta_{4}$} & \matindex{$ \tilde \theta_{5}$} & \matindex{$\tilde \theta_{6}$} & \matindex{$\tilde \theta_{7}$}\\
        \begin{block}{c@{\hspace{3pt}}c[@{\hspace{3pt}}c@{\hspace{3pt}}c@{\hspace{3pt}}c@{\hspace{3pt}}c@{\hspace{3pt}}c@{\hspace{3pt}}c@{\hspace{3pt}}c@{\hspace{3pt}}c]}
          \matindex{$t_{1}$}& \matindex{$\theta_{4}$}
          & \color{blue}1 &\color{blue}1  & \color{blue}0& \color{blue}0&0&0&1& \color{blue} -1\\
          \matindex{$t_{2}$}& \matindex{$\theta_{5}$}
          & \color{blue} 1 &\color{blue} -1 & \color{blue}0 &\color{blue}0&1&1&0&\color{blue}0 \\
          & \matindex{$\theta_{2}$}
          &0 & 0 & 1 &1&1&-1&0&0 \\
          \matindex{$t_{0}$}& \matindex{$\theta_{3}$}
          &  \color{blue}  0 & \color{blue} 0 & \color{blue}1 &\color{blue}-1&0&  0& \color{blue} 1& \color{blue} 1 \\
        \end{block}
      \end{blockarray} .\end{gathered}
  \end{equation}
\end{figure*}
so that at round $r$ worker $i$ performs task $\theta_{i + 5r \% n'}$ and similarly at round $r$ we have $t_i = \theta_{i+r \% k}$.

More generally we find a displacement $d$ equal to an (odd) prime number that is co-prime\footnote{Although it is notoriously hard to find a prime divisor of number, it is surprisingly  easy to find a prime \textit{non}-divisor. This easy to see since one can just test divisibility by 2,3,5,... and since the product of the first primes less than 100 is approximately equal to $2^{120}$ this will halt \textit{very} quickly, i.e. it will halt in less than 25 steps for $k< 2^{120}$ since there are only 25 primes less than $100$.} to $k$ and we let worker $i$ performs task $\theta_{i + dr \% n'}$  at round $r$ and let $t_i = \theta_{i+r \% k}$ at round $r$. This allows gives the following statistical uniformity lemma:
\begin{lemma}\normalfont\label{lem:spin}
  If the displacement, $d$, is equal to an (odd) prime number that is co-prime to $k$ then the blue rectangle in (the general form of) Eq.~\ref{eq:spin} will visit every entry in the matrix with every possible pattern of $X^{(t)}$ and every cyclic permutation of the $t_i$ contained inside of the blue rectangle.
\end{lemma}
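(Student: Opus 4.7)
The plan is to observe that the round index $r$ controls two quantities: the position of the blue rectangle, which shifts by $d$ modulo $n'$ per round, and the cyclic permutation of the subtask labels $t_i$, which shifts by $1$ modulo $k$ per round. The two coprimality conditions (given $\gcd(d,k)=1$, and automatic $\gcd(d,n')=1$ because $d$ is an odd prime while $n'$ is a power of two) are exactly what decouple these actions into a single orbit that covers every configuration.

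First, I would formalize the state of the rectangle at round $r$ as the pair $(a(r),b(r)) = (dr \bmod n',\, r \bmod k)$: the coordinate $a(r)$ picks out which contiguous block of $t$ workers the rectangle anchors on (and hence which $X^{(t)}/L^{(t)}/R^{(t)}$ sub-block of $\mathcal{C}^{(n',k',t)}$ from Section~\ref{subsec:pow_two_con} it overlaps), while $b(r)$ determines the cyclic shift of $(t_0,\ldots,t_{k-1})$ relative to $(\theta_0,\ldots,\theta_{k-1})$.

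Next, I would invoke two standard facts. Since $\gcd(d,n')=1$, the map $r \mapsto dr \bmod n'$ is a bijection on $\mathbb{Z}/n'\mathbb{Z}$; similarly $\gcd(d,k)=1$ makes $r \mapsto dr \bmod k$ a bijection on $\mathbb{Z}/k\mathbb{Z}$. A Chinese Remainder Theorem argument applied to the diagonal $\{(dr,r) : r \in \mathbb{Z}\}$ inside $\mathbb{Z}/n'\mathbb{Z} \times \mathbb{Z}/k\mathbb{Z}$ then shows that as $r$ ranges over $\{0,1,\ldots,\operatorname{lcm}(n',k)-1\}$, the pair $(a(r),b(r))$ realizes every compatible pair. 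Since the construction of Section~\ref{subsec:pow_two_con} takes $n'$ and $k$ both powers of two with $k \mid n'$, the compatibility constraint becomes automatic, so every rectangle position is paired with every cyclic permutation of the $t_i$.

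From the block structure of $\mathcal{C}^{(n',k',t)}$ in Eq.~\ref{eq:code}, each distinct rectangle anchor overlaps a distinct $X^{(t)}$ sign pattern, so covering every anchor forces every pattern to appear. The main obstacle is precisely formalizing the ``blue rectangle'' in the general case---the paper only illustrates it concretely for $(n,k,t)=(6,3,2)$ in Eq.~\ref{eq:spin}---and then verifying that the padding by virtual tasks $\theta_j$ for $k \leq j < k'$ does not create spurious identifications in the orbit. Once this bookkeeping is pinned down, the rest follows from the two coprimality observations above and the fact that the shift-by-$d$ worker index and shift-by-$1$ task index generate the full product cyclic group.
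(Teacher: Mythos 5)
Your overall route is the same as the paper's: track the pair (rectangle anchor, cyclic shift of the $t_i$) as a single orbit inside $\mathbb{Z}/n'\mathbb{Z}\times\mathbb{Z}/k\mathbb{Z}$ and invoke the Chinese Remainder Theorem. The paper disposes of the lemma in one line by asserting that the per-round increment $(1,d)$ generates that product group because $d$ is coprime to $1$, $k$, and $n'$. So the skeleton of your argument matches.

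There is, however, a genuine gap in your final step. You claim that because ``$n'$ and $k$ are both powers of two with $k\mid n'$,'' the compatibility constraint on $(a(r),b(r))=(dr\bmod n',\,r\bmod k)$ becomes automatic and hence every anchor is paired with every cyclic shift. This is wrong on two counts. First, in this lemma $k$ is \emph{not} a power of two: the entire purpose of the general-parameter construction is to handle $k$ that are not powers of two (only the padded $n'$ and $k'$ are powers of two, while the cyclic shifts of the $t_i$ are taken modulo the true $k$, e.g.\ $k=3$ in Eq.~\ref{eq:spin}). Second, the implication runs backwards: the orbit of $(dr,r)$ has size $\operatorname{lcm}(n',k)=n'k/\gcd(n',k)$, so it covers all $n'k$ pairs precisely when $\gcd(n',k)=1$. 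If instead $k\mid n'$ with $k>1$, then $\gcd(n',k)=k$, the orbit has size only $n'$, and each rectangle position is paired with exactly \emph{one} cyclic shift (the one with $b\equiv d^{-1}a\pmod{k}$) --- the opposite of the coverage the lemma asserts. The hypothesis that actually does the work in the CRT step is $\gcd(n',k)=1$ (so that the product group is cyclic and the componentwise-coprime increment generates it), not divisibility of $k$ into $n'$; under your stated hypothesis the conclusion you draw is false, so the proof as written does not go through.
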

\begin{proof}
  The leftmost point of the blue rectangle is equal to $ (i,i)+r(1,d) \equiv (i + r ,i+dr  ) \text{ mod }\mathbb{Z }/n'\mathbb{Z} \times \mathbb{Z }/k\mathbb{Z} $. By the Chinese remainder theorem (see \cite{ireland1982classical} or \cite{dummit2003abstract}) $(1,d)$ is a generator of $\text{ mod }\mathbb{Z }/n'\mathbb{Z} \times \mathbb{Z }/k\mathbb{Z}$ since $d$ is coprime to $1,$ $k,$ and $n'.$
\end{proof}

\section{Analysis and Evaluation }
\begin{table*}[t]
  \caption{Comparison of main algorithms.}
  \label{tab:compare}
  \vskip 0.15in
  \begin{center}
    \begin{small}
      \begin{sc}
        \begin{tabular}{lcccccc}
          \toprule
          Code   & Encoding          & Communication                 & Decoding                                               & Weight                  & Asynchronous & Parameter    \\
          Scheme & Complexity        & Complexity                    & Complexity                                             & Range                   & ?            & Compression? \\
          \midrule
          LWPD   & 0                 & $\mathcal{O}(\frac{k}{t})   $ & 0                                                      & $t \in [2,\frac{n}{4}]$ & $\surd$      & $\surd$      \\
          GC     & $\mathcal{O}(nk)$ & $\mathcal{O}(k)$              & $\mathcal{O}(k^{\omega}) \leq  \mathcal{O}(k^{2.38}) $ & $t = n-k+1$             & $\times$     & $\times$     \\
          $K$-AC & 0                 & $\mathcal{O}(k)         $     & 0                                                      & $t\in [1,n]$            &
          $\surd $
                 & $\times $                                                                                                                                                          \\
          \bottomrule
        \end{tabular}
      \end{sc}
    \end{small}
  \end{center}
  \vskip -0.1in
\end{table*}
\begin{figure*}
  \begin{subfigure}
    \centering
    \includegraphics[width=.32\textwidth]{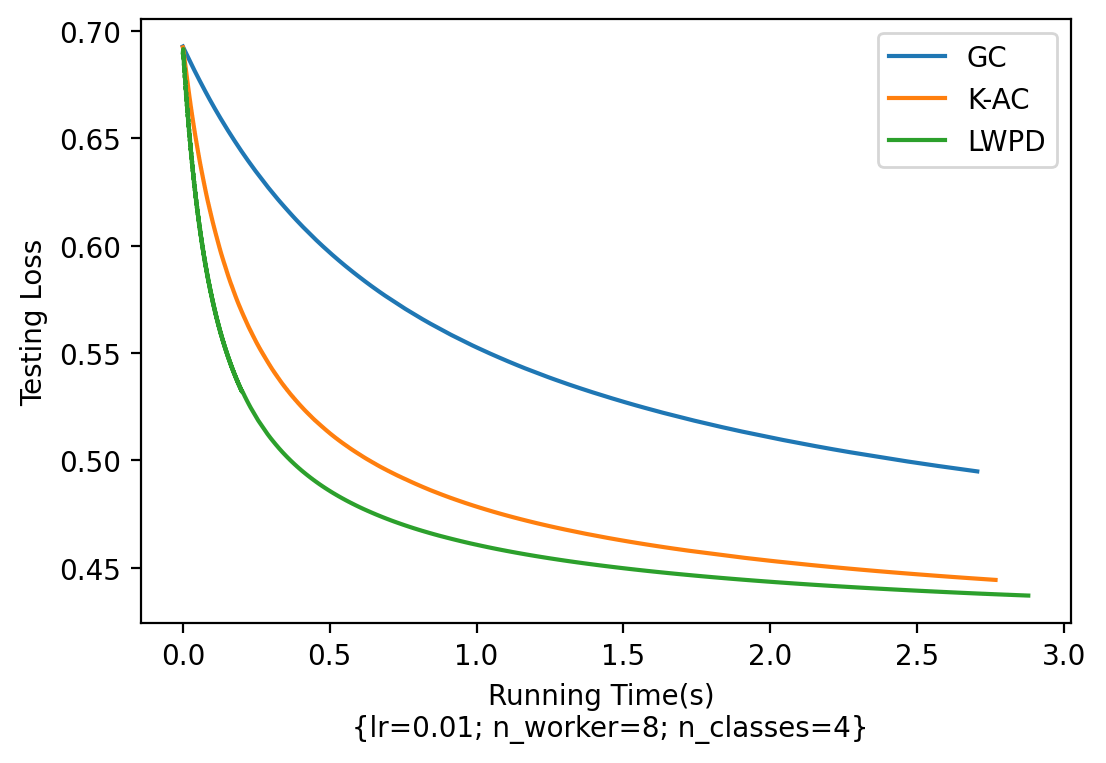}
  \end{subfigure}
  \begin{subfigure}
    \centering
    \includegraphics[width=.32\textwidth]{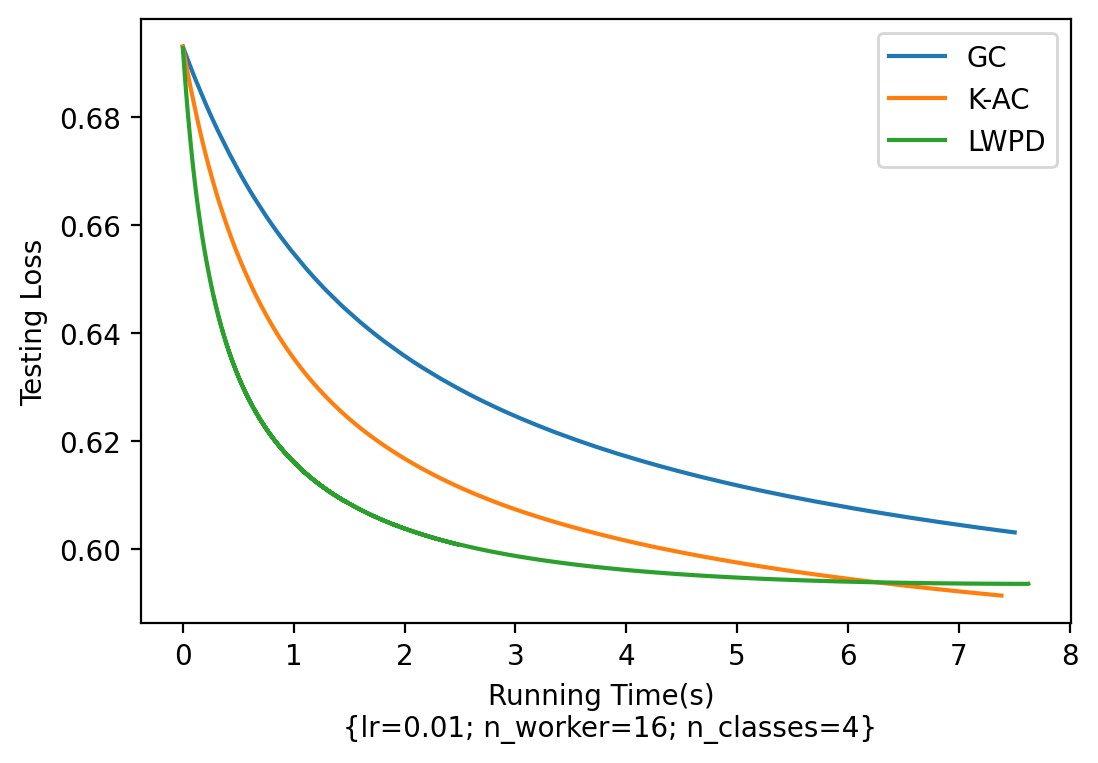}
  \end{subfigure}
  \begin{subfigure}
    \centering
    \includegraphics[width=.32\textwidth]{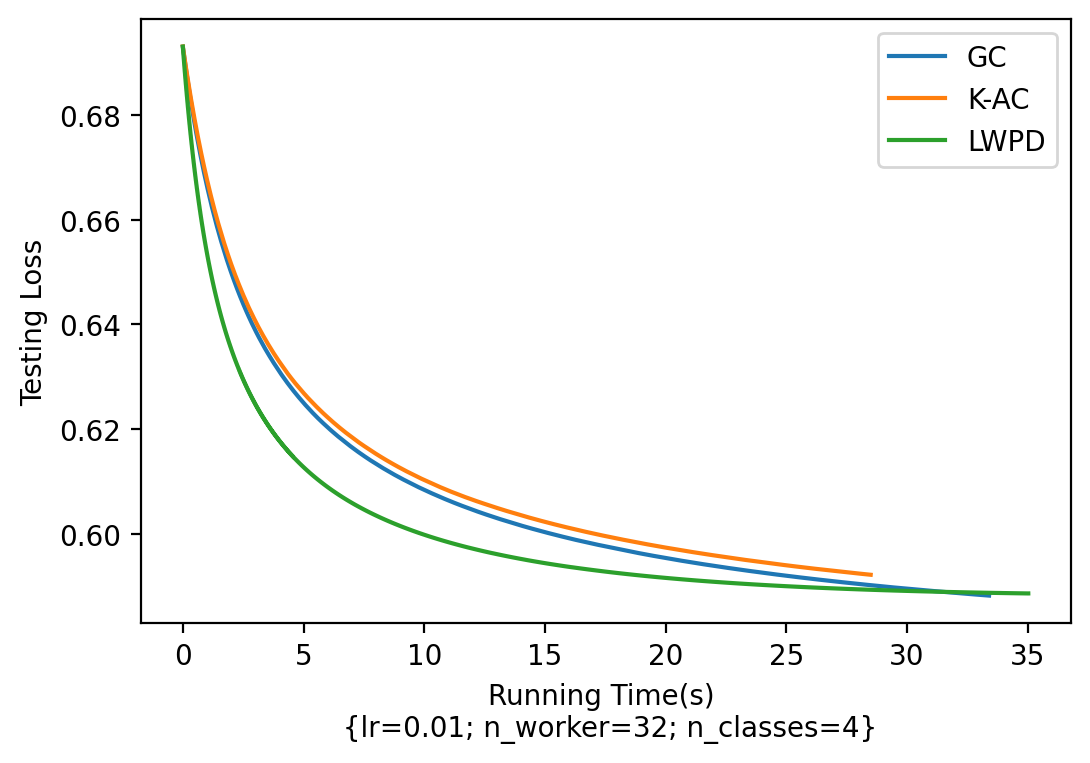}
  \end{subfigure}
  \caption{Experiments with 8, 16, and 32 workers.}
  \label{fig:exp_fig1}
\end{figure*}

\subsection{Theoretical Analysis}
In this section we give a theoretical comparison of the algorithms, see Table~\ref{tab:compare}, and we prove theorems regarding the existence and non-existence of codes with certain properties. The following theorem, \emph{i.e.,} Thm.~\ref{thm:no_mds}, shows that Hamming-distance MDS coding schemes must have the workers do an arbitrarily large amount of work. We then later show that our codes are approximately MDS with respects to the projective geometry metric which maximize the amount of information sent back by the workers\footnote{This is because large angles gives us large conditional entropy.} while keeping the amount of work done by the workers as low as possible; \emph{i.e.,} there are approximately projective-MDS that have weights $t=2,...,n$.
\begin{theorem}\label{thm:no_mds}\normalfont
  If the parameters $(n,k,t)$ satisfy
  $t \leq n-k$ then there is no Hamming-distance MDS $(n,k)$-code for the derivatives.
\end{theorem}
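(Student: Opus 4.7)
The plan is to argue by contradiction via a double-counting argument on the generator matrix $\mathcal{G}_\mathcal{C}$. Suppose a Hamming-distance MDS $(n,k)$-code with weight $t \leq n-k$ existed. Since MDS recovery means that any $k$ surviving workers must determine $\{\theta_0,\ldots,\theta_{k-1}\}$, and each worker's row of $\mathcal{G}_\mathcal{C}$ (an $n \times k$ matrix) specifies its linear combination of the $\theta_j$, the MDS condition is equivalent to the requirement that every $k \times k$ submatrix obtained by selecting $k$ of the $n$ rows be invertible.

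The first step would be to extract a column-support constraint from this invertibility. Fix a column index $j$ and let $Z_j$ be the set of rows whose $j$-th entry is zero. If $|Z_j| \geq k$, one can select $k$ rows from $Z_j$; the resulting $k \times k$ submatrix has an identically zero $j$-th column and is therefore singular, contradicting MDS. Hence each column of $\mathcal{G}_\mathcal{C}$ contains at least $n-k+1$ non-zero entries, and summing over the $k$ columns, the total number of non-zero entries of $\mathcal{G}_\mathcal{C}$ is at least $k(n-k+1)$.

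Next, I would bound the same quantity from above using the row-weight hypothesis: every row has at most $t$ non-zero entries, so the total is at most $nt$. Combining the two estimates yields $nt \geq k(n-k+1)$, equivalently $t \geq k(n-k+1)/n$. Feeding in the hypothesis $t \leq n-k$ collapses this into $(n-k)^2 \geq k$, producing the desired contradiction in the regime where this inequality fails.

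The main obstacle is that the naive column-counting bound $t \geq k(n-k+1)/n$ is strictly weaker than $t \geq n-k+1$ whenever $k<n$, so pure counting rules out MDS only in the regime $(n-k)^2 < k$ (the high-rate regime, where the theorem is most interesting for the paper's small-$t$ codes). To promote this to an unconditional no-go matching the theorem's statement, the natural tool is a dual-code argument: the dual of an MDS $(n,k)$-code is itself an MDS $(n,n-k)$-code with minimum Hamming weight $k+1$, so every row of a parity-check matrix $H$ for $\mathcal{C}$ has at least $k+1$ non-zeros, and pairing this with the identity $H\mathcal{G}_\mathcal{C} = 0$ against the sparsity constraint on the rows of $\mathcal{G}_\mathcal{C}$ should force $t \geq n-k+1$. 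Fleshing out that dual-side estimate is the step I expect to carry most of the technical weight.
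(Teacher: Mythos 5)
Your counting argument is correct as far as it goes, but it takes a genuinely different route from the paper, and the difference is instructive. The paper's entire proof is a one-line citation of Thm.~7.4.1 of \cite{2003fundamentals} (the weight distribution of MDS codes satisfies $A_i = 0$ for $0 < i \leq n-k$), read as a statement about the \emph{rows} of $\mathcal{G}_\mathcal{C}$, i.e., about $t$ directly. But the objects that theorem constrains are codewords, which are length-$n$ vectors indexed by workers; in the paper's transposed convention ($\mathcal{G}_\mathcal{C}$ is $n \times k$ with rows equal to worker tasks) those live in the \emph{column} space of $\mathcal{G}_\mathcal{C}$. Your first step --- each column has at least $n-k+1$ non-zeros, since otherwise some $k \times k$ row-submatrix has an identically zero column --- is exactly the correct translation of that citation: every derivative $\theta_j$ must be replicated on at least $n-k+1$ workers. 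Averaging then yields $t \geq k(n-k+1)/n$, which is the honest unconditional consequence; the paper skips the transposition issue entirely.

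The gap you flag at the end is real, and the dual-code patch will not close it, because the row-weight claim ``MDS forces every row to have weight at least $n-k+1$'' is not true. Take $n=4$, $k=2$, and rows $(1,1),(1,2),(1,3),(1,4)$: any two rows are linearly independent (Vandermonde), so any $k=2$ workers recover both derivatives and the scheme is Hamming-MDS, yet every row has weight $t = 2 = n-k$. The dual of this $[4,2]$ MDS code is again MDS with all codewords of weight at least $k+1=3$, but those are again length-$n$ vectors, so the identity $H\mathcal{G}_\mathcal{C} = 0$ tells you nothing new about row supports of $\mathcal{G}_\mathcal{C}$. What survives is precisely what you already proved: $t \geq k(n-k+1)/n$, which rules out $t \leq n-k$ exactly when $(n-k)^2 < k$, and which in the paper's operating regime $n \approx 2k$ still forces $t \gtrsim k/2$, so sparse MDS codes are genuinely impossible there. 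I would state and prove that averaged bound rather than continue chasing the per-row claim; it is the strongest statement the cited weight-distribution theorem actually supports.
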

\begin{proof}
  If $A(\mathcal{C})_i=$``number of rows of weight $i$'', then Theorem 7.4.1 in \cite{2003fundamentals} gives us that an MDS will have $A(\mathcal{C})_i=0$ for $i \leq n-k$.
\end{proof}
In particular; the proof of Thm.~\ref{thm:no_mds} can be strengthened to say that:
\begin{corollary}\label{cor:must_work}  In an MDS $(n,k)$-coding scheme $A(\mathcal{C})_i=0$, for $i \leq n-k$, where $A(\mathcal{C})_i=0$ is the weight distribution of a code.
\end{corollary}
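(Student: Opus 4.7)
The plan is short because the statement of Corollary~\ref{cor:must_work} is essentially already contained in the reference used to prove Thm.~\ref{thm:no_mds}; it merely isolates the ``vanishing'' portion of the full MDS weight enumerator. First I would invoke the Singleton bound, which asserts that every linear $[n,k]$-code has minimum Hamming distance $d \leq n-k+1$, and recall that, by definition, an MDS code achieves this bound with equality, i.e., $d = n-k+1$. Next I would appeal to the standard identity, valid for any linear code, that the minimum distance equals the minimum Hamming weight of a nonzero codeword. Combining these two facts immediately gives that every nonzero codeword of an MDS code has weight at least $n-k+1$, so $A(\mathcal{C})_i = 0$ for all $1 \leq i \leq n-k$.

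The only detail requiring care is the index $i=0$ and the distinction between ``rows of $\mathcal{G}_\mathcal{C}$'' (the convention adopted inside the proof of Thm.~\ref{thm:no_mds}) and ``codewords of $\mathcal{C}$'' (the more standard meaning of the weight distribution). If $A(\mathcal{C})_i$ counts codewords in the usual sense, then $A(\mathcal{C})_0 = 1$ from the zero codeword and the claim should be read as ranging over $1 \leq i \leq n-k$; if instead it counts rows of $\mathcal{G}_\mathcal{C}$, as in the proof of Thm.~\ref{thm:no_mds}, then the full-row-rank hypothesis on $\mathcal{G}_\mathcal{C}$ guarantees no row is identically zero, so $A(\mathcal{C})_0 = 0$ as well and the statement holds verbatim for all $i \leq n-k$. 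Since rows of $\mathcal{G}_\mathcal{C}$ are themselves codewords, either interpretation reduces to the same Singleton-equality argument.

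I do not anticipate any substantive obstacle: the heavy lifting (the Singleton bound and its equality case for linear codes) is entirely classical, and the corollary amounts to reading Theorem~7.4.1 of \cite{2003fundamentals} at its low-weight end. The only mild ``strengthening'' over what is quoted inside Thm.~\ref{thm:no_mds} is packaging the conclusion as a standalone vanishing statement on the full weight-distribution range $\{1,\dots,n-k\}$ rather than only extracting the single inequality needed to rule out small-weight rows.
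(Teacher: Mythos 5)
Your argument is correct and is essentially the paper's own: the paper proves this corollary by citing Theorem~7.4.1 of the standard coding-theory reference on MDS weight distributions, and your Singleton-equality plus minimum-weight-equals-minimum-distance derivation is exactly the content of that citation, unpacked. Your added remark about the $i=0$ case and the row-versus-codeword reading of $A(\mathcal{C})_i$ is a fair point of care that the paper glosses over, but it does not change the substance.
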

The importance of Cor.~\ref{cor:must_work} is made clear through the following interpretation:
\begin{corollary}
  In an MDS $(n,k)$-coding scheme all of the workers must do at least $n-k$ amount of work.
\end{corollary}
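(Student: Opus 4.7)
The plan is to deduce this corollary essentially as a direct translation of Cor.~\ref{cor:must_work} into the operational language of distributed computation. The key observation, already established in Sec.~\ref{subsec:back}, is that the rows of the generator matrix $\mathcal{G}_\mathcal{C}$ are precisely the encoded worker tasks $\tilde\theta_i$, and that the number of non-zero entries in row $i$ (i.e., the row weight) counts the number of data partitions that worker $i$ must touch. Thus ``amount of work performed by worker $i$'' is by definition the Hamming weight of the $i$-th row of $\mathcal{G}_\mathcal{C}$.

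First, I would invoke Cor.~\ref{cor:must_work}, which states that the weight distribution satisfies $A(\mathcal{C})_i = 0$ for every $i \leq n-k$. Since $A(\mathcal{C})_i$ is by definition the number of rows of weight exactly $i$, the vanishing of all these entries says that no row of $\mathcal{G}_\mathcal{C}$ has weight in $\{0,1,\dots,n-k\}$. Equivalently, every single row has weight at least $n-k+1$, which in particular is $\geq n-k$.

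Translating back to workers: since the $i$-th row encodes the task of worker $\mathcal{W}_i$ and its weight equals the number of partitions $D_j$ that $\mathcal{W}_i$ must process, every worker touches at least $n-k+1 \geq n-k$ partitions. This yields the claim.

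There is no real obstacle here; the content of the statement is that an MDS constraint forces \emph{every} row to be heavy, not merely the average, and this is exactly what Cor.~\ref{cor:must_work} already asserts. The proof therefore amounts to two sentences: (i) unwrap the definition of $A(\mathcal{C})_i$ to conclude every row has weight $> n-k$, and (ii) invoke the identification between row weight and per-worker workload from Sec.~\ref{subsec:back}. If anything, one might worry about edge cases such as $k = n$ (where $n-k = 0$ and the statement is vacuous) or degenerate codes with a zero row, but the former is trivial and the latter is excluded precisely by the MDS hypothesis via Cor.~\ref{cor:must_work}.
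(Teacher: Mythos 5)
Your proof is correct and follows exactly the route the paper intends: the corollary is presented as a direct operational reinterpretation of Cor.~\ref{cor:must_work}, obtained by unwrapping the definition of the weight distribution $A(\mathcal{C})_i$ to conclude every row has weight at least $n-k+1 \geq n-k$, and then using the identification (from Sec.~\ref{subsec:back}) of row weight with per-worker workload. No gaps; your treatment of the edge cases is if anything more careful than the paper's.
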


However a simple observation of the construction given in Sec.~\ref{subsec:pow_two_con} gives us that:
\begin{theorem}\normalfont
  There exists $(n,k,t)$-LWPD codes for any $t \geq 2$.
\end{theorem}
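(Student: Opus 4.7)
The plan is to present this as a direct reading-off of the construction already given in Sections 3.2 and 3.3, since the theorem is explicitly flagged as following from ``a simple observation.'' So rather than developing new machinery, I would structure the proof around verifying two bookkeeping facts about $\mathcal{C}^{(n,k,t)}$: that it is well-defined for all $t = 2^p \geq 2$, and that the weight (number of nonzero entries per row) is exactly $t$.

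First I would handle the base case where $n = 2k$ and $k, t$ are powers of two with $t \leq k/2$ (so that $s = k/t \geq 2$). Here $\mathcal{C}^{(2k,k,t)}$ is exhibited explicitly by Eq.~\eqref{eq:code}. To count weights, I would separate the top $k$ rows from the bottom $k$ rows. In the top half, each row lies in exactly one block $X^{(t)}$ on the diagonal and is zero elsewhere, and since $X^{(t)}_{\alpha,\beta} = \pm t^{-1/2}$ has no zero entries, each such row has weight exactly $t$. In the bottom half, each row lies in exactly one $L^{(t)}$ block and one $R^{(t)}$ block; by the block definitions of $L^{(2t)}$ and $R^{(2t)}$, each row of $L^{(t)}$ contributes $t/2$ nonzero entries and each row of $R^{(t)}$ contributes $t/2$ nonzero entries, summing to $t$ again. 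So every row of $\mathcal{C}^{(2k,k,t)}$ has weight $t$, which is exactly the LWPD weight parameter.

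Next I would extend to arbitrary $(n,k)$ using the scheme of Section 3.3: round $n$ and $k$ up to the nearest powers of two $n'$ and $k'$, build $\mathcal{C}^{(n',k',t)}$ as above, and apply the toroidal permutation with prime displacement $d$ coprime to $k$. By Lemma~\ref{lem:spin}, this permutation preserves the row-weight structure (it simply rearranges which column positions the nonzeros occupy within a row), so the resulting code retains weight $t$ while serving the requested $(n,k)$ parameters. Virtual repetition of tasks $\theta_{i} = \theta_{i \% k}$ for $i \geq k$ is the only ingredient needed to close the gap between $k'$ and $k$.

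I do not expect any real obstacle — the theorem is nearly a tautology given the explicit construction, which is presumably why the authors call it ``a simple observation.'' The one subtlety worth flagging is the upper bound $t \leq n/4$ from Table~\ref{tab:compare}: this arises because we need $s = k/t \geq 2$ for the diagonal $L^{(t)}\mid R^{(t)}$ shift pattern in Eq.~\eqref{eq:code} to have at least two distinct columns, and $n = 2k$ then gives $t \leq k/2 = n/4$. Since the theorem statement only claims $t \geq 2$ rather than an upper bound, this is not a gap — but I would mention it to make the range of valid $t$ transparent.
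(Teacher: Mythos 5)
Your proposal is correct and matches the paper's intent exactly: the paper offers no written proof beyond the remark that the theorem follows from ``a simple observation of the construction given in Sec.~3.2,'' and your row-weight count (weight $t$ from a single $X^{(t)}$ block in the top half, and $t/2 + t/2$ from the $L^{(t)}$ and $R^{(t)}$ blocks in the bottom half) together with the permutation argument for general $(n,k)$ is precisely that observation spelled out. Your flag that the construction implicitly requires $s = k/t \geq 2$, consistent with the range $t \in [2, n/4]$ in Table~\ref{tab:compare}, is a worthwhile clarification the paper leaves unstated.
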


The next theorem proves that under the projective distance we have that our code achieves approximately maximal distance.
\begin{theorem}\normalfont\label{thm:opt}
  The family $(n,k,t)$-code are approximately MDS $(n,k)$-code for the derivatives in the  projective-distance for $n \leq 2k$.
\end{theorem}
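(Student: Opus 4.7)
The plan is to show that every pair of distinct rows of $\mathcal{C}^{(2k,k,t)}$ are either orthogonal or have inner product of magnitude exactly $\tfrac{1}{2}$, so that the minimum projective distance is bounded below by $\arccos(\tfrac{1}{2})=\tfrac{\pi}{3}$, which is a constant fraction of the maximum possible projective distance $\tfrac{\pi}{2}$. This is what ``approximately MDS in the projective metric'' means. The key tool is the recursion of Lemma~\ref{lem:tensor}, $X^{(2t)} = X^{(2)}\otimes X^{(t)}$, which implies the row decomposition
\[
X^{(t)}_{b\alpha'} \;=\; \tfrac{1}{\sqrt{2}}\bigl(X^{(t/2)}_{\alpha'},\ (-1)^{b}X^{(t/2)}_{\alpha'}\bigr),
\]
and gives the orthonormality of the rows of $X^{(t)}$ as a Hadamard matrix.

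First I would verify that every row of $\mathcal{C}^{(2k,k,t)}$ is a unit vector. For the ``identity'' rows $\tilde\theta_j$ with $j<k$, this is immediate from the orthonormality of the rows of $X^{(t)}$. For the ``parity'' rows $\tilde\theta_j$ with $j\geq k$, the support lies in two blocks, one carrying $L^{(t)}$ and one carrying $R^{(t)}$; a direct calculation using the definitions of $L^{(t)}$ and $R^{(t)}$ shows that each such row has squared norm $\tfrac{1}{2}+\tfrac{1}{2}=1$.

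Next I would enumerate the cases for $\langle \tilde\theta_i,\tilde\theta_j\rangle$ with $i\ne j$. (a) Two identity rows with column supports in different block columns have disjoint support and are trivially orthogonal; two identity rows in the same block column are distinct rows of the same $X^{(t)}$ and are orthogonal by Lemma~\ref{lem:tensor}. (b) Two parity rows whose block supports are disjoint are trivially orthogonal; if they share exactly one block column, the shared block contains $L^{(t)}$ on one side and $R^{(t)}$ on the other, and by the recursive decomposition the inner product between an $L$-row and an $R$-row vanishes because one is supported in the left half and the other in the right half of the $t$ coordinates. If two parity rows occupy the same pair of block columns (which does not actually occur in the construction of Eq.~\ref{eq:code}, but covers the worst case), the $L\cdot L$ and $R\cdot R$ contributions cancel thanks to the sign flip in $R^{(2t)}$ between its upper and lower halves. (c) For one identity row $X^{(t)}_\alpha$ and one parity row whose supports overlap in one block, the inner product reduces to $\langle X^{(t)}_\alpha, L^{(t)}_\beta\rangle$ or $\langle X^{(t)}_\alpha, R^{(t)}_\beta\rangle$, and by the recursion these evaluate to $\pm\tfrac{1}{2}\,\delta_{\alpha',\beta}$.

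This case analysis yields a uniform bound $|\langle\tilde\theta_i,\tilde\theta_j\rangle|\leq \tfrac{1}{2}$ for $i\neq j$, and hence
\[
\min_{i\ne j}\ d(\tilde\theta_i,\tilde\theta_j)\;\geq\;\arccos\!\bigl(\tfrac{1}{2}\bigr)\;=\;\tfrac{\pi}{3}\;=\;\tfrac{2}{3}\cdot\tfrac{\pi}{2},
\]
so the code attains at least two-thirds of the projective diameter, independently of $k$ and $t$, which is the sense in which it is approximately MDS. I expect the main obstacle to be bookkeeping: carefully indexing which first bit of $\alpha$ controls the sign in the tensor decomposition of $X^{(t)}$ and which halves of $L^{(t)}$ and $R^{(t)}$ these first bits address, and then handling the wrap-around row $\tilde\theta^{(t)}_{2s-1}$ whose $L$ and $R$ blocks are not adjacent; every genuinely interesting cancellation in the proof comes from tracking these signs correctly through one level of the recursion.
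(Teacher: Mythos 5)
Your proposal is correct and follows essentially the same route as the paper: both arguments bound the pairwise inner products of the codewords by $\tfrac{1}{2}$ in absolute value so that the minimum projective distance is $\arccos(\tfrac{1}{2})=\tfrac{\pi}{3}$, a constant fraction of the optimal $\tfrac{\pi}{2}$ --- the paper merely asserts this in one line, while you carry out the block-by-block case analysis that justifies it. The only piece you omit is the paper's opening reduction, via Lemma~\ref{lem:spin}, from general parameters to the power-of-two construction $\mathcal{C}^{(2k,k,t)}$; you would need that (or an equivalent observation that the toroidal permutations preserve the set of pairwise inner products) to cover the full family of $(n,k,t)$-codes claimed in the statement.
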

\begin{proof}
  By Lem.~\ref{lem:spin} it suffices to prove this for powers of two.
  The distance between any vectors is $\arccos \frac{1}{2}= \frac{\pi}{3}$ and this only happens for $t$ out of $n$ choices for any vector; the distance is equal to the maximum $\frac{\pi}{2}$ for all other vectors.
\end{proof}

The proof of Thm.~\ref{thm:opt} gives us that the distance between any two codewords $\tilde \theta, \tilde \theta$ is bounded above by
$
  d(\mathcal{C}) = \min_{\tilde \theta, \tilde \theta \in \mathcal{C}} d(  \tilde \theta, \tilde \theta) = \frac{\pi}{3}
$
and thus in term of percentages of the optimal $\frac{\pi}{2}$ we have
\begin{equation}\label{eq:per_mds}
  \frac{ \frac{\pi}{2} - d(\mathcal{C}) }{\frac{\pi}{2} } = \frac{1}{6} \approx 16 \%
\end{equation}
of the ``theoretical'' optimal distance; however there can be no code that achieves the ``theoretical'' optimal distance:
\begin{theorem}
  The percentage in Eq.~\ref{eq:per_mds} cannot be made 100\%; \emph{i.e.,} there are no projective MDS codes for $n>k$ which achieve distance $\frac{\pi}{2}$.
\end{theorem}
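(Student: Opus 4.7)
The plan is to observe that the projective distance is bounded above by $\pi/2$, with equality if and only if the two representative vectors are orthogonal in the usual Euclidean inner product, and then invoke an elementary dimension count.

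First, I would unpack the definition. Recall $d(\tilde \theta, \tilde \theta') = \min\{\arccos\langle \tilde \theta, \tilde \theta' \rangle, \arccos\langle -\tilde \theta, \tilde \theta' \rangle\}$ for unit codewords. Since $\arccos(-x) = \pi - \arccos(x)$, this can be rewritten as $\min\{\theta, \pi - \theta\}$ where $\theta = \arccos\langle \tilde \theta, \tilde \theta' \rangle \in [0,\pi]$. The function $\theta \mapsto \min\{\theta, \pi-\theta\}$ attains its maximum value of $\pi/2$ precisely at $\theta = \pi/2$, i.e.\ precisely when $\langle \tilde \theta, \tilde \theta' \rangle = 0$. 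So the claim ``$d(\mathcal{C}) = \pi/2$'' forces all distinct codewords to be pairwise Euclidean-orthogonal.

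Second, I would apply a pigeonhole/linear-algebra argument. The $n$ codewords of $\mathcal{C}$ are rows of the generator matrix $\mathcal{G}_\mathcal{C} \in \mathbb{R}^{n \times k}$, hence vectors in $\mathbb{R}^k$. Pairwise orthogonality of nonzero vectors implies linear independence, and in $\mathbb{R}^k$ any linearly independent set has cardinality at most $k$. Therefore $n \le k$, contradicting the hypothesis $n > k$.

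There is essentially no obstacle here: the whole argument is two elementary observations once the projective distance is reinterpreted via the identity $\arccos(-x) = \pi - \arccos(x)$. The only subtlety worth flagging in the write-up is the implicit normalization assumption (codewords are nonzero, which is automatic since a zero row would correspond to a worker doing nothing) and the fact that the bound $\pi/2$ is tight \emph{and unattainable} simultaneously when $n > k$, which is exactly what the theorem asserts. Thus the 16\% gap in Eq.~\ref{eq:per_mds} is not an artefact of our construction but a genuine information-theoretic barrier in the overcomplete regime $n > k$.
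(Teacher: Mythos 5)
Your proposal is correct and follows essentially the same route as the paper's own (much terser) proof: distance $\pi/2$ forces pairwise orthogonality, orthogonality forces linear independence, and a dimension count rules out more than $k$ such vectors in $\mathbb{R}^k$. Your write-up is more careful than the paper's, in particular in justifying via $\arccos(-x)=\pi-\arccos(x)$ that the projective distance attains $\pi/2$ only at Euclidean orthogonality, but there is no substantive difference in approach.
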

\begin{proof}
  If this statement was false there would be $n+1$ linearly independent vectors in $n$-dimensional space, $\mathbb{F}^n$.
\end{proof}

An interesting fact about the bound given by Thm.~\ref{thm:opt} is that is a constant independent of the dimension of the code and thus it scales well for larger and larger number of workers.

%

\subsection{Experimental Results}

The experiments were run on AWS Spot Instances; the workers are AWS EC2 c5a.large instances (compute optimized) and master is an AWS EC2 r3.large instance (Memory Optimized).
The experimental procedure was written using Mpi4py \cite{d1, d2} in Python. We used a modification of the code in \cite{Rashish2017} written by the first author of \cite{pmlr-v70-tandon17a} to implement Gradient Coding (GC) as well as the random data generation; the implementation only supported logistic regression and we generalized it to support multinomial logistic regression (\emph{i.e.,} more than one class). The software in \cite{Rashish2017} used a Gaussian mixture model of two distributions to create input features for the logistic model; we generalized it to allow for an arbitrary number of Gaussian distributions in the mixture to create a robust data set.
To be as fair as possible in our comparison with $K$-Asynchronous Gradient descent ($K$-AC) \cite{pmlr-v84-dutta18a} we made setup for $K$-AC nearly identical with the exception of the coding scheme; \emph{i.e.,} $K$-AC and LWPD used the exact same data partitions and same number of $k$ workers in the $k$-asynchronous batches.

We ran experiments with 8 workers, 16 workers, and 32 workers (see Fig.~\ref{fig:exp_fig1}). The testing error (\emph{i.e.,} the workers do not train on the test data) is plotted against the time.
In all of the experiments we ran LWPD codes converged far faster; however, it often overfitted and sometimes the other algorithms would eventually get a lower test error.
There are two possible explanations for this: either LWPD converges so much faster than the other algorithms that they never get a chance to overfit or the noise that initially helps LWPD find a very quick solution eventually causes it to stay some distance from the optimal solution.
There is evidence for both of these possibilities because there are experiments were the other algorithms do not catch up to LWPD; see Sec.~\ref{app:exp} for more results.

\begin{figure*}
  \begin{subfigure}
    \centering
    \includegraphics[width=.32\textwidth]{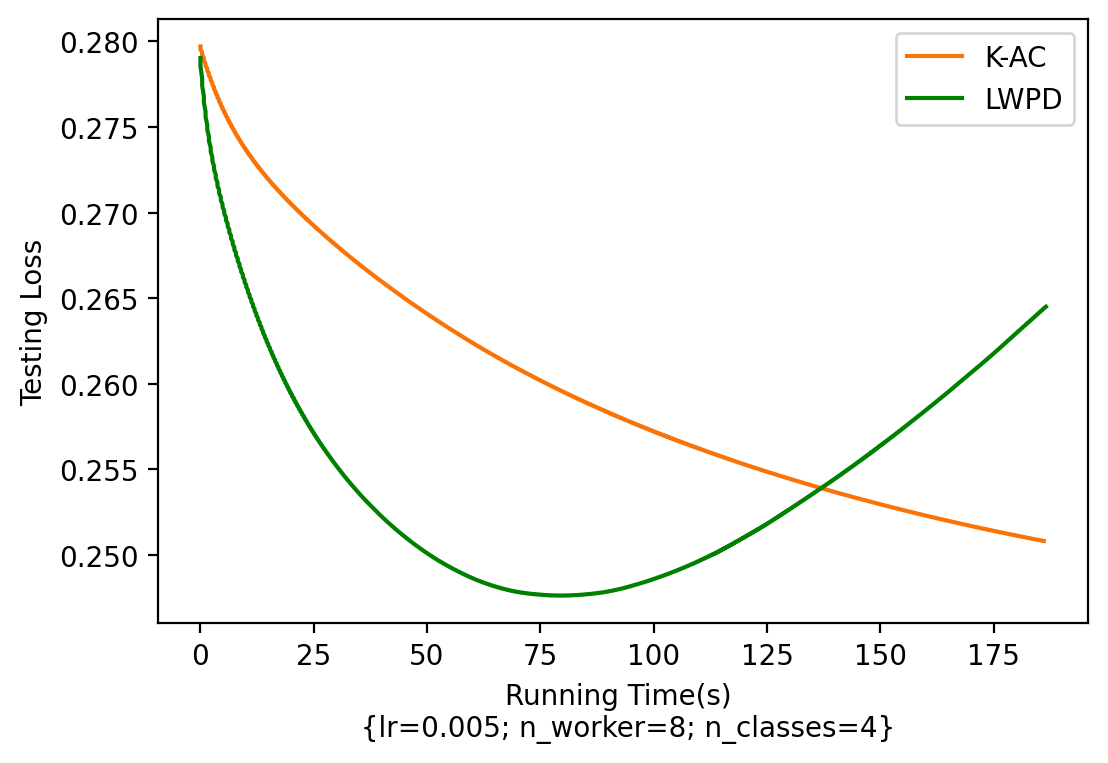}
  \end{subfigure}
  \begin{subfigure}
    \centering
    \includegraphics[width=.32\textwidth]{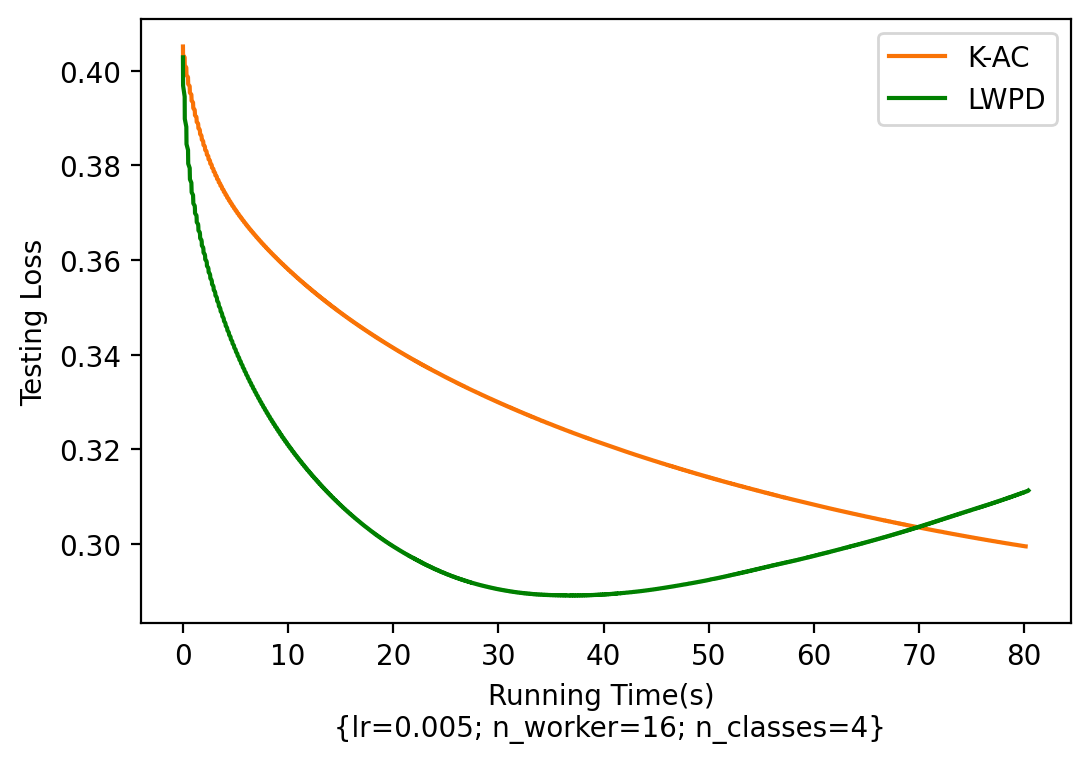}
  \end{subfigure}
  \begin{subfigure}
    \centering
    \includegraphics[width=.32\textwidth]{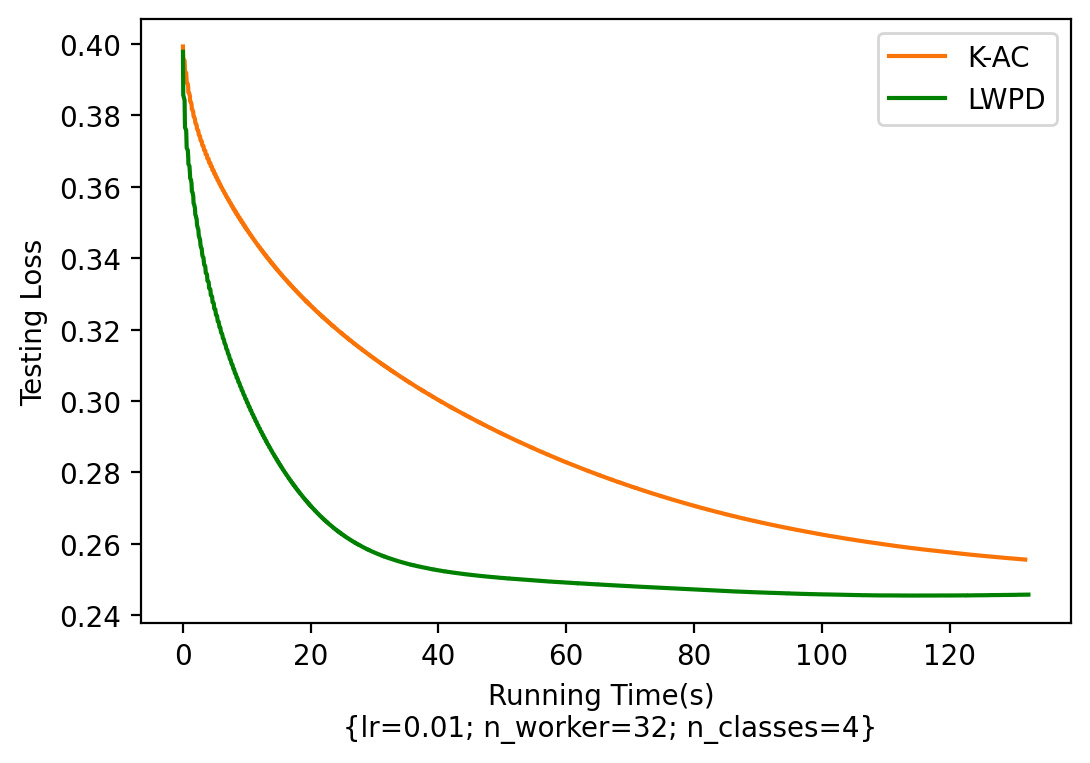}
  \end{subfigure}
  \caption{Further experiments with 8, 16, and 32 workers.}
  \label{fig:exp_fig2}
\end{figure*}
We have modified and re-ran our experiments to allow for deep neural networks (see Fig.~\ref{fig:exp_fig2}).
The new architecture had ReLU activations at the hidden nodes, Softmax at the output nodes, and cross entropy as the loss function.
The experiments were run using Mpi4py on AWS Spot Instances with AWS EC2 c5a.large for the workers and AWS EC2 r3.large for the master. 
The original authors' implementation of Gradient Coding did not support deep neural networks and thus we simply compared with $K$-Asynchronous Gradient descent for these experiments.
Both the updated and original experiments contained stragglers and highlighted the asynchronous nature of the algorithm, since we used Amazon Spot instances, which are the most unreliable but cost-efficient machines that Amazon AWS can offer.

In our experimental setup, all workers compute training loss individually. These plots should not be considered a centralized measure of model performance because the data set is \emph{de}centralized. Thus, plotting training metrics does not make as much sense as, for example, visualizing the testing/validation set error or loss. Furthermore, we were guided by the intuition that the validation error is the more important measure, since it measures how well the algorithm or model can perform on data it has not seen.

The experimental data plots show \emph{test loss} ($y$-axis) versus the \emph{time elapsed} ($x$-axis).
The test loss refers to the error on the validation set (which the model is not trained on) and thus overfitting occurs when the training loss continues to decrease, but the generalization error stops decreasing and eventually begins to grow.
In our experiments, we observed that other methods reach similar convergence point much later in terms of number of iterations. One hypothesis is that our gradient coding approach offers faster training. A second possible explanation comes from the lossy compression of our algorithm. We trade off error for runtime. Our hypothesis is that the noise coming from the trade-off is beneficial in that it prevents us from being stuck in any potential saddle point.

\section{Conclusion and Open Problems}
We propose LWPD codes which allow for asynchronous gradient updates by maximizing the amount of information contained by random subsets of vectors and minimizing the weight of the code.
Our code compresses the gradient in a manner that scales well with the number of nodes (and the dimension of the data) and achieves a lower a communication complexity and memory overhead with respect to the state of the art.
Another improvement of our algorithm over the state of the art is our discovery of the correct information metric; all of the other coding schemes assume that the Hamming distance is the correct metric, which does not consider the natural (differential) geometry of the gradient.
Furthermore, we showed that our code was very efficient since the master can just directly add and subtract the results returned by the workers without needing to decode the information.
We proved many of the complexity guarantees theoretically and also provided much empirical evidence for the performance.
For future work we would like to strengthen the theoretical results by proving stronger complexity bounds as well as further investigating the effect of noise (or lossy compression) on the performance; it seems that at first the lossy compression is a great help but eventually it causes over fitting.
\section*{Acknowledgments}
This paper is based upon work supported by the National Science
Foundation under Grant No. CCF-2101388.
\bibliography{references0, ref}
\bibliographystyle{icml2022}

\newpage
\appendix
\onecolumn

\section{Experimental Results}\label{app:exp}
\begin{figure}[H]
  \begin{subfigure}
    \centering
    \includegraphics[width=.44\textwidth]{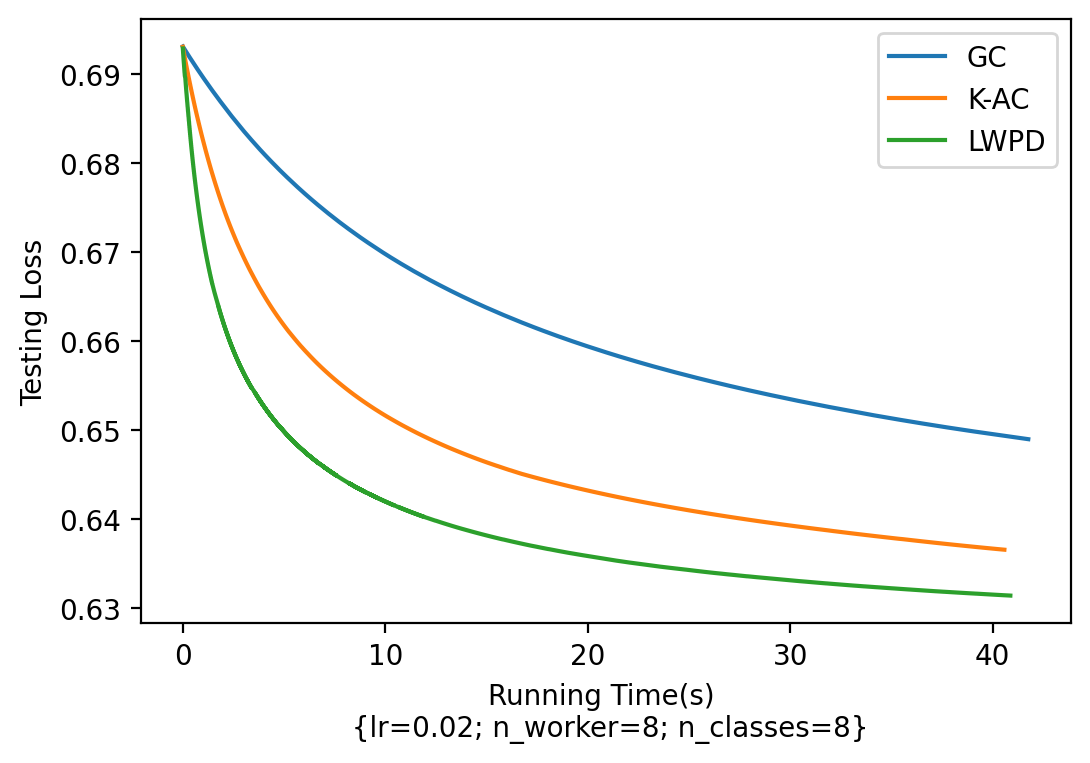}
  \end{subfigure}
  \begin{subfigure}
    \centering
    \includegraphics[width=.44\textwidth]{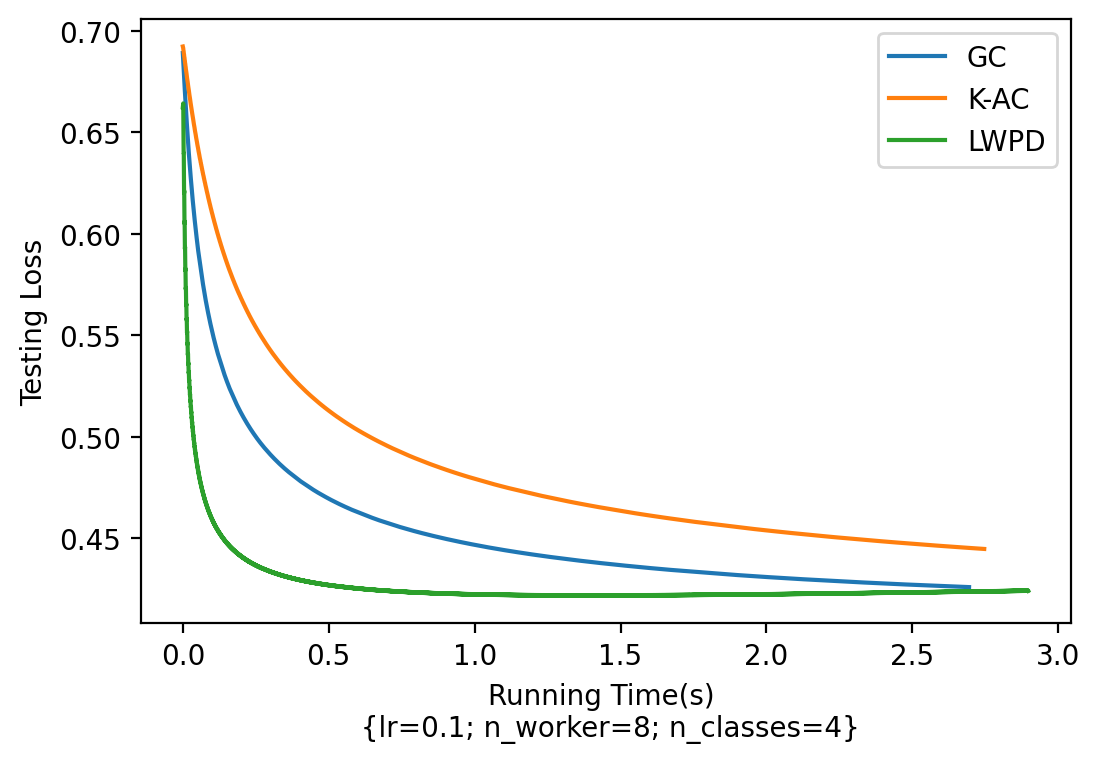}
  \end{subfigure}
  \begin{subfigure}
    \centering
    \includegraphics[width=.44\textwidth]{ICML_testingloss_n9_lr_001_nout_4.png}
  \end{subfigure}
  \begin{subfigure}
    \centering
    \includegraphics[width=.44\textwidth]{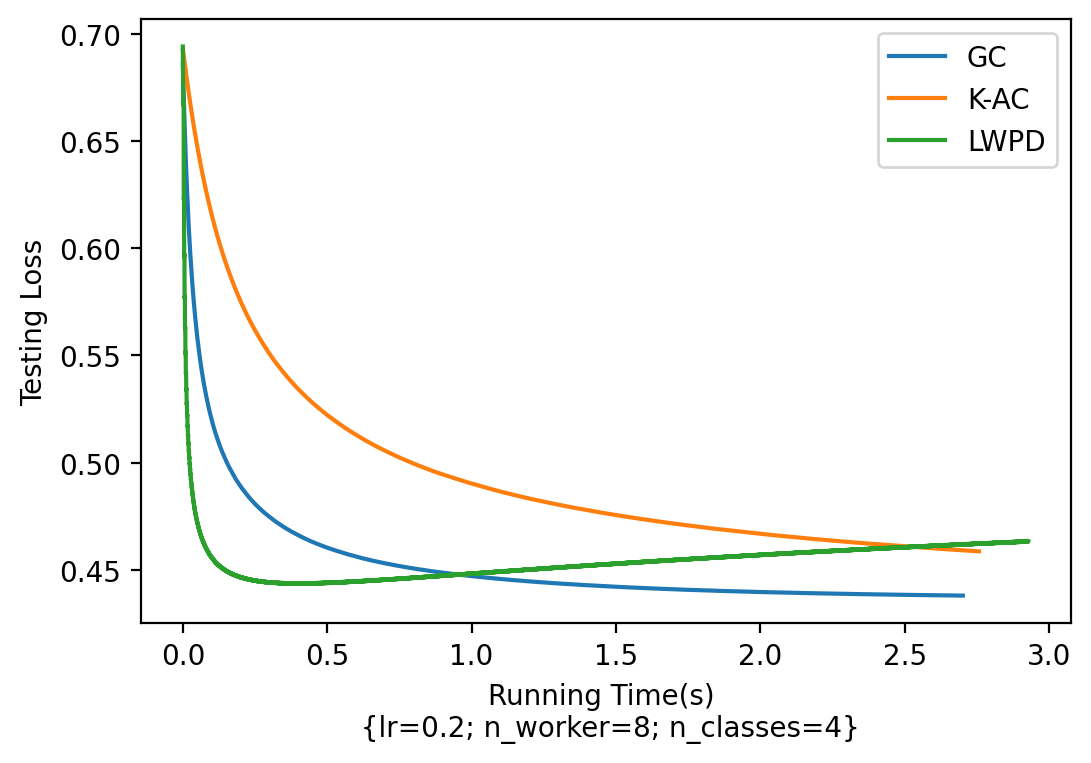}
  \end{subfigure}
  \begin{subfigure}
    \centering
    \includegraphics[width=.44\textwidth]{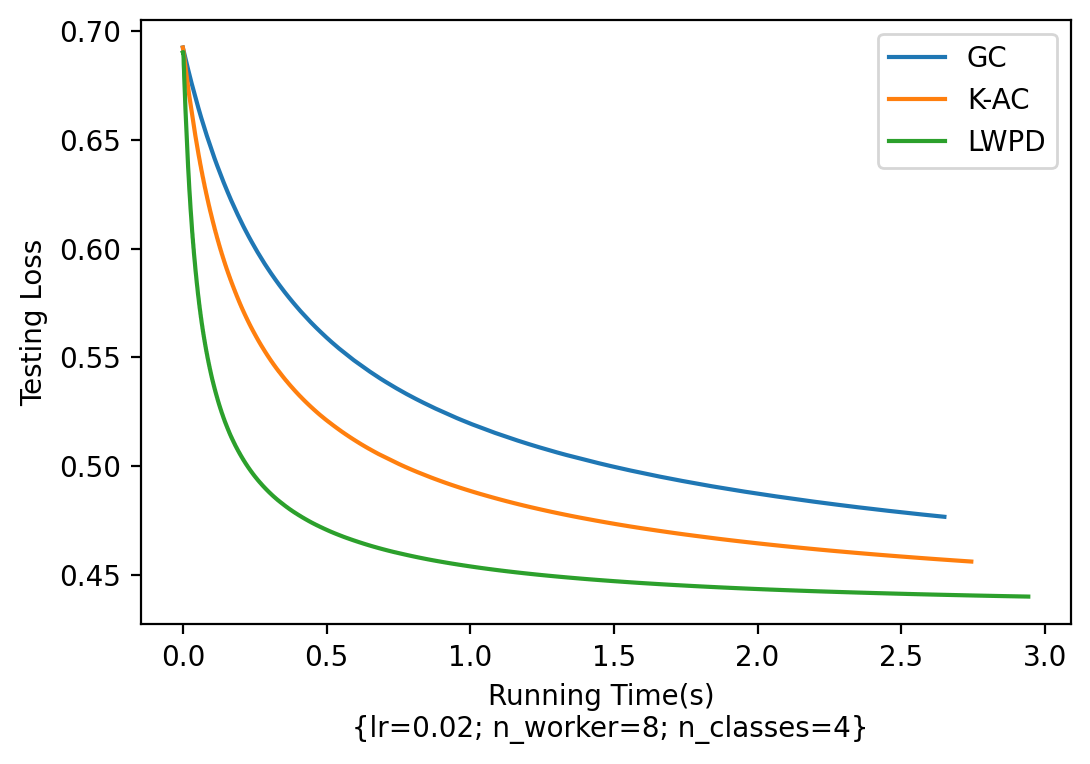}
  \end{subfigure}
  \hfill
  \begin{subfigure}
    \centering
    \includegraphics[width=.44\textwidth]{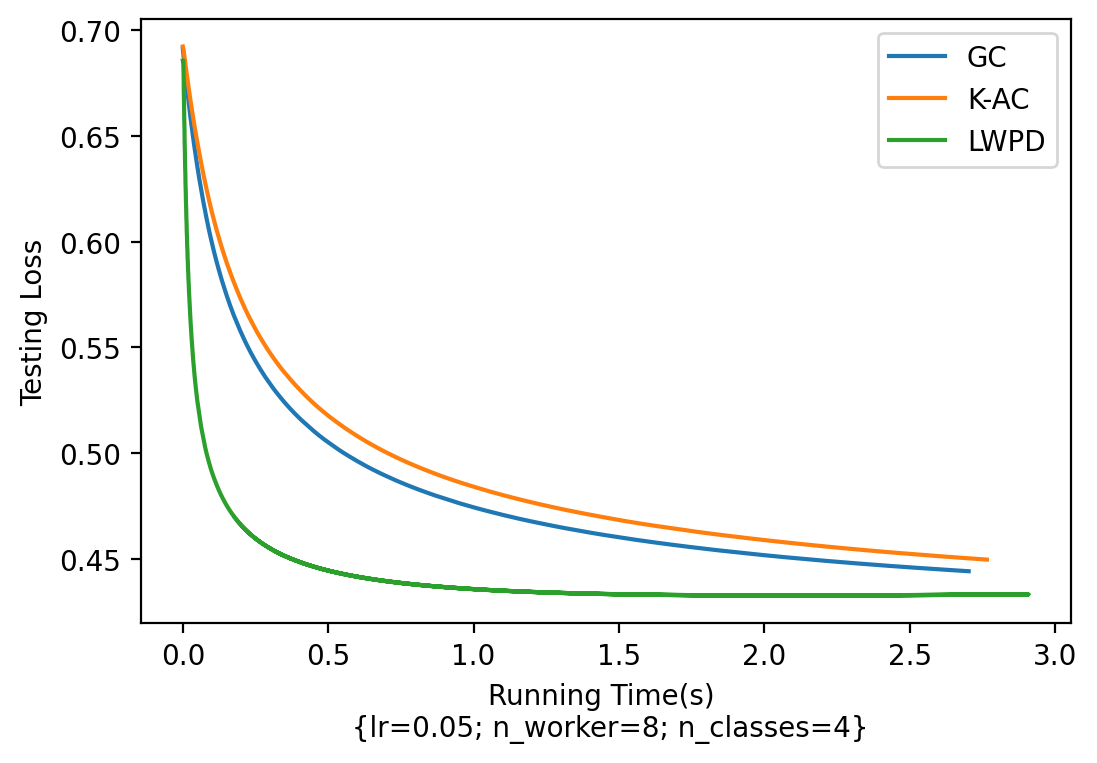}
  \end{subfigure}

  \caption{Experiments with 8 workers.}
\end{figure}

\newpage
\begin{figure}[H]
  \begin{subfigure}
    \centering
    \includegraphics[width=.44\textwidth]{ICML_testingloss_n17_lr_001_nout_4.png}
  \end{subfigure}
  \begin{subfigure}
    \centering
    \includegraphics[width=.44\textwidth]{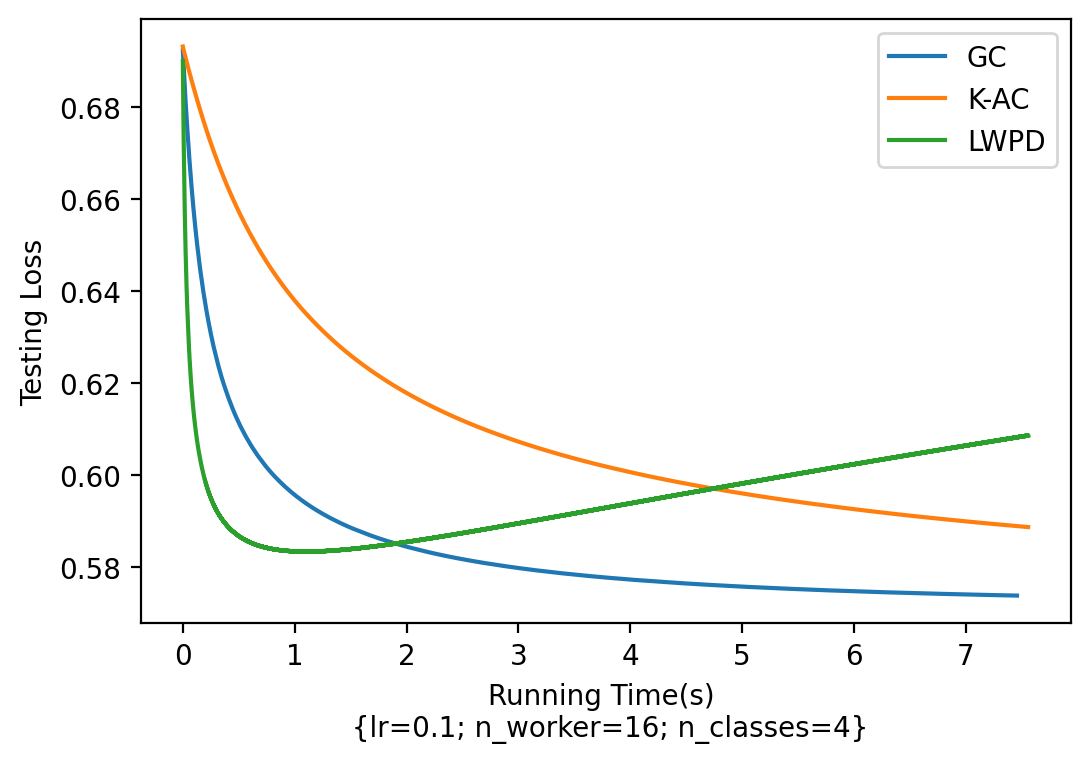}
  \end{subfigure}
  \hfill
  \begin{subfigure}
    \centering
    \includegraphics[width=.44\textwidth]{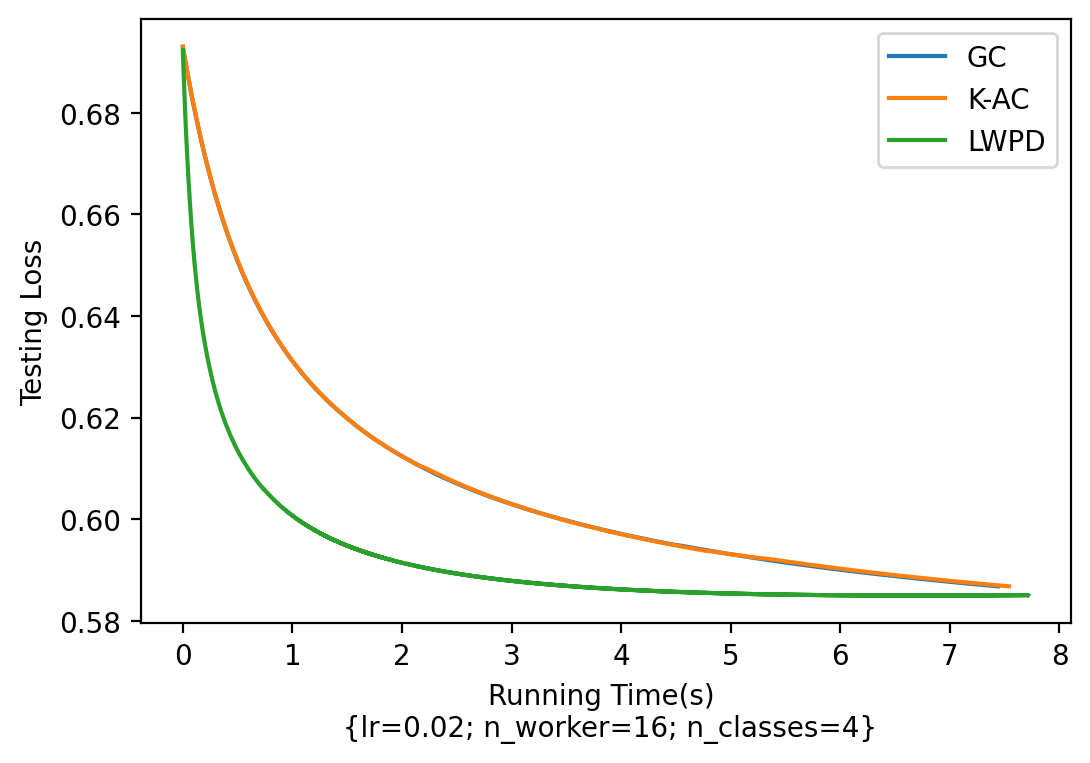}
  \end{subfigure}
  \hfill
  \begin{subfigure}
    \centering
    \includegraphics[width=.44\textwidth]{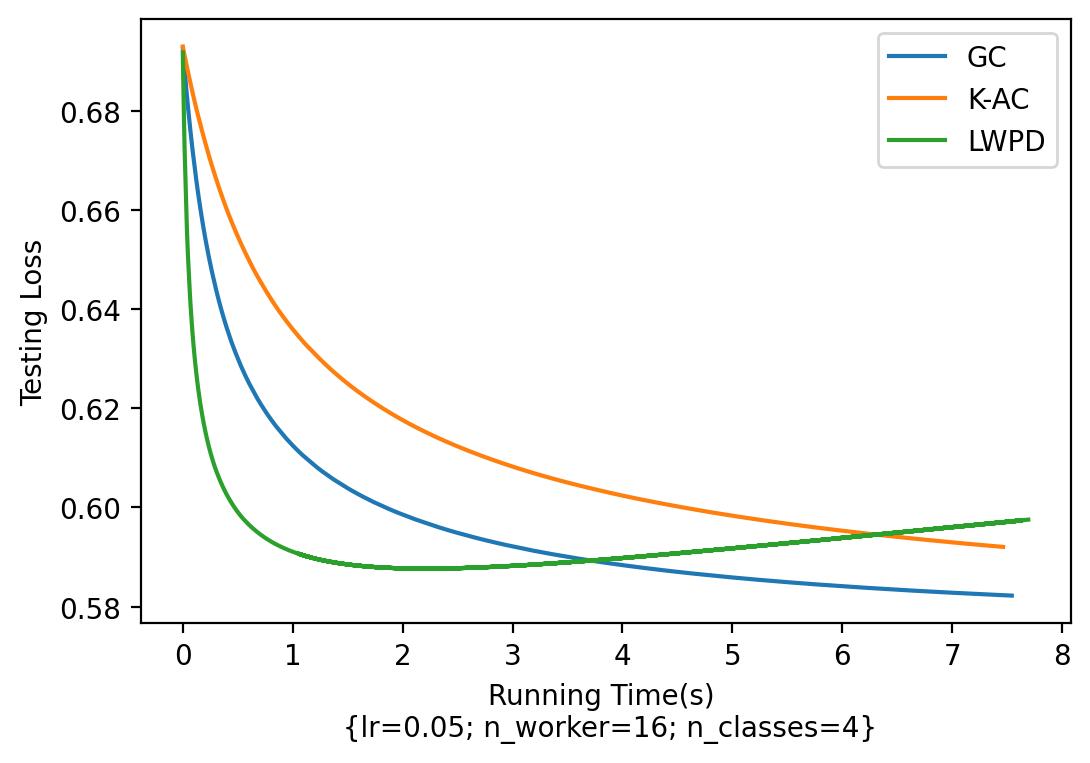}
  \end{subfigure}

  \caption{Experiments with 16 workers.}
\end{figure}

\newpage
\begin{figure}[H]
  \begin{subfigure}
    \centering
    \includegraphics[width=.44\textwidth]{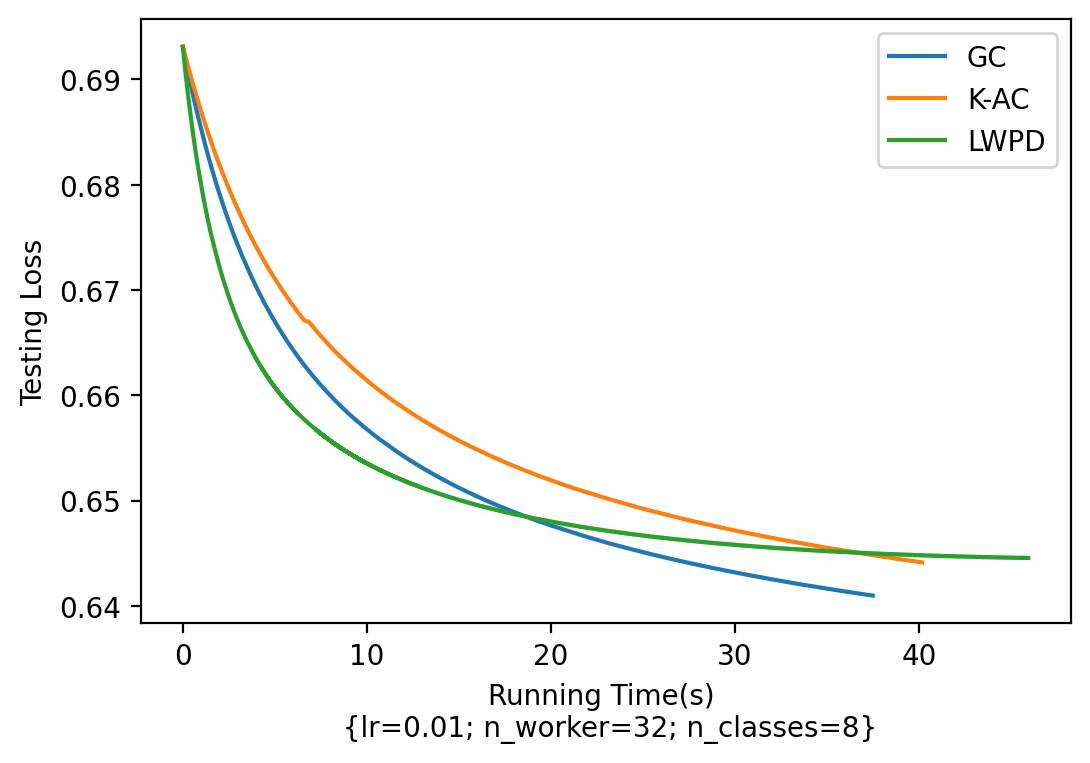}
  \end{subfigure}
  \begin{subfigure}
    \centering
    \includegraphics[width=.44\textwidth]{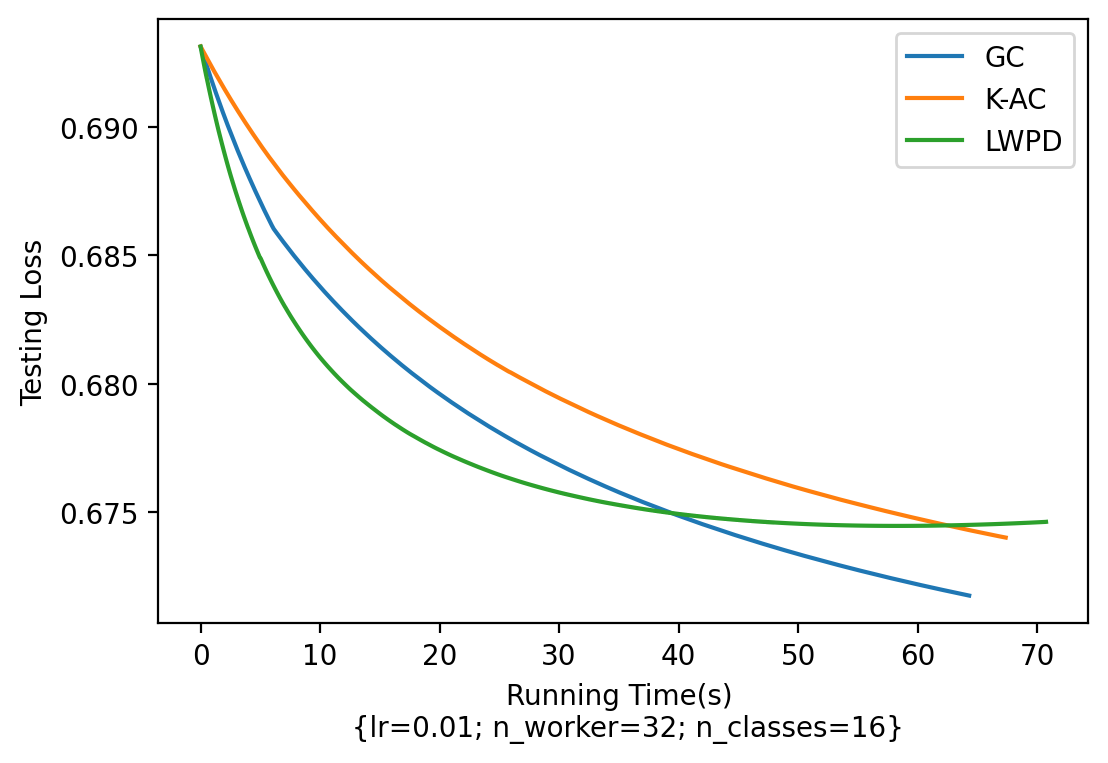}
  \end{subfigure}
  \begin{subfigure}
    \centering
    \includegraphics[width=.44\textwidth]{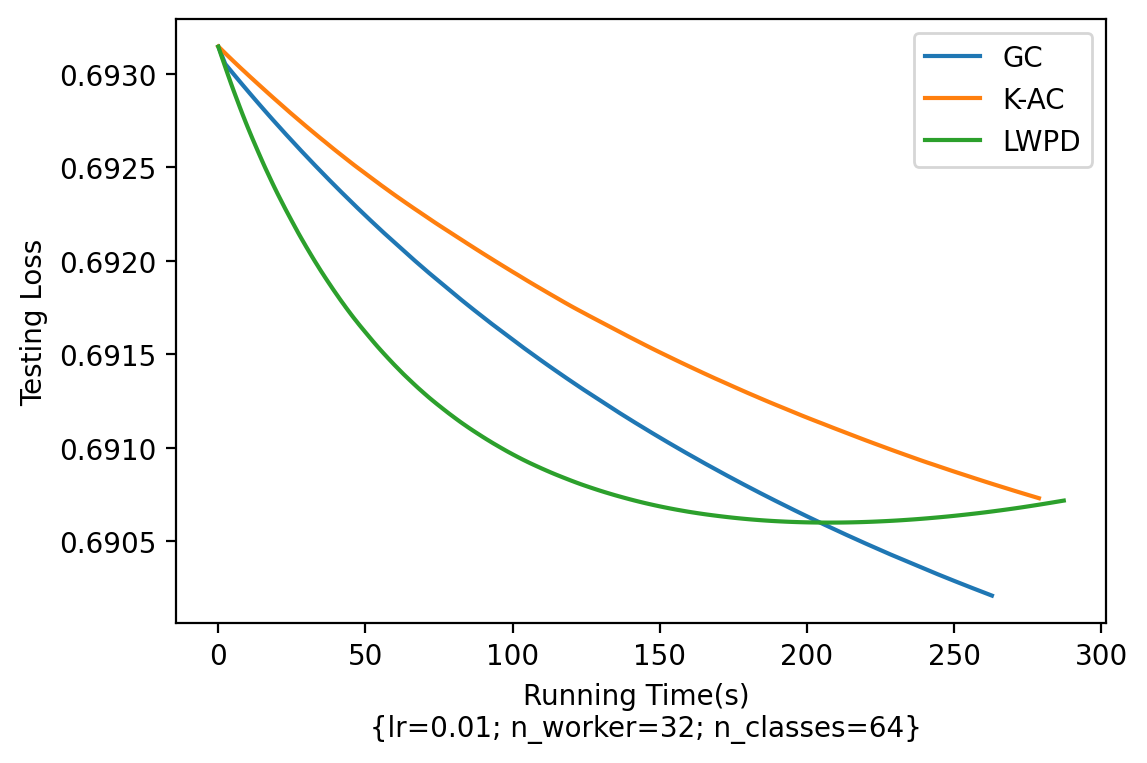}
  \end{subfigure}
  \begin{subfigure}
    \centering
    \includegraphics[width=.44\textwidth]{ICML_testingloss_n33_lr_001_nout_4_npart_4.png}
  \end{subfigure}
  \begin{subfigure}
    \centering
    \includegraphics[width=.44\textwidth]{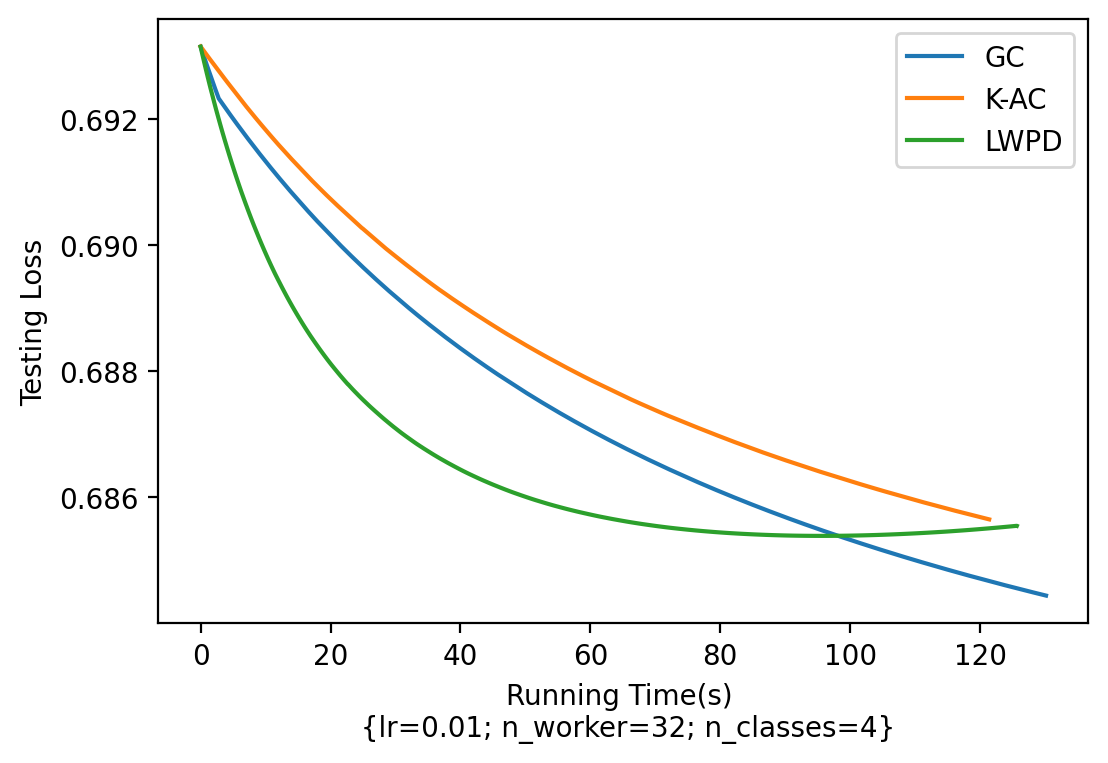}
  \end{subfigure}
  \caption{Experiments with 32 workers.}
\end{figure}
%

\end{document}